\theoremstyle{thmstyleone}
\newtheorem{theorem}{Theorem}
\theoremstyle{thmstyletwo}%
\newtheorem{remark}{Remark}
\begin{document}

\title[Constrained Policy Gradient Method for Safe and Fast Reinforcement Learning]{Constrained Policy Gradient Method for Safe and Fast Reinforcement Learning: a Neural Tangent Kernel Based Approach}

\author*[1]{\fnm{Balázs} \sur{Varga} \email{balazsv@chalmers.se}}

\author[1]{\fnm{Balázs} \sur{Kulcsár} \email{kulcsar@chalmers.se}}

\author[2]{\fnm{Morteza Haghir} \sur{Chehreghani} \email{morteza.chehreghani@chalmers.se}}

\affil*[1]{\orgdiv{Department of Electrical Engineering}, \orgname{Chalmers University of Technology}, \orgaddress{\street{Hörsalsvägen 11}, \city{Gothenburg}, \country{Sweden}}}

\affil[2]{\orgdiv{Department of Computer Science and Engineering}, \orgname{Chalmers University of Technology}, \orgaddress{\street{Hörsalsvägen 11}, \city{Gothenburg}, \country{Sweden}}}

\abstract{This paper presents a constrained policy gradient algorithm. We introduce constraints for safe learning with the following steps. First, learning is slowed down (lazy learning) so that the episodic policy change can be computed with the help of the policy gradient theorem and the neural tangent kernel. Then, this enables us the evaluation of the policy at arbitrary states too. In the same spirit, learning can be guided, ensuring safety via augmenting episode batches with states where the desired action probabilities are prescribed. Finally, exogenous discounted sum of future rewards (returns) can be computed at these specific state-action pairs such that the policy network satisfies constraints. Computing the returns is based on solving a system of linear equations (equality constraints) or a constrained quadratic program (inequality constraints, regional constraints). 
Simulation results suggest that adding constraints (external information) to the learning can improve learning in terms of speed and transparency reasonably if constraints are appropriately selected. 
The efficiency of the constrained learning was demonstrated with a shallow and wide ReLU network in the Cartpole and Lunar Lander OpenAI gym environments.
The main novelty of the paper is giving a practical use of the neural tangent kernel in reinforcement learning.}
\keywords{Reinforcement learning, Policy gradient methods, Constrained learning, Neural Tangent Kernel}

\maketitle

\section{Introduction}
In reinforcement learning (RL), the agent learns in a trial and error way. 
In a real setting, it can lead to undesirable situations which may result in damage or injury of the agent or the environment system. In addition, the agent might waste a significant amount of time exploring irrelevant regions of the state and action spaces. 
Safe RL can be defined as the process of learning policies that maximize the expectation of the return in problems under safety constraints. Thus, safe exploration often includes some prior knowledge of the environment (e.g.,~a model \cite{berkenkamp2017safe, fisac2018general, zimmer2018safe}) or has a risk metric \cite{garcia2015comprehensive, turchetta2020safe}.
In RL, safety can be guaranteed in a probabilistic way. Learning in this context aims to strike a balance between exploration and exploitation so that the system remains within a safe set. On the other hand, in a safety-critical setting, exploration cannot be done blindly. Therefore, some sort of knowledge of the environment is vital. 

Constrained learning is an intensively studied topic, having close ties to safe learning \cite{yang2019advancing}. 
\cite{han2008modified} argues that constrained learning has better generalization performance and a faster convergence rate. \cite{tessler2018reward} presents a constrained policy optimization, which uses an alternative penalty signal to guide the policy. \cite{uchibe2007constrained} proposes an actor-critic method with constraints that define the feasible policy space.

In policy gradient methods, the function approximator predicts action values or probabilities from which the policy can be derived directly. Finding the parameters of the agent is done via optimization (gradient ascent) following the policy gradient theorem \cite{sutton2018reinforcement}. Policy gradient methods have many variants and extensions to improve their learning performance \cite{zhang2021multi}. For example, in \cite{kakade2001natural}  a covariant gradient is defined based on the underlying structure of the policy. \cite{ciosek2018expected} deduced expected policy gradients for actor-critic methods. \cite{cheng2020trajectory} deals with the reduction of the variance in Monte-Carlo (MC) policy gradient methods.

In this work we use priori knowledge of the environment not to build a model, rather than to impose some constraints on the policy.  We develop a deterministic policy gradient algorithm (based on REINFORCE, \cite{williams1987class}) augmented with different types of constraints via shaping the rewards. Opposed to \cite{altman2019constrained}, transition probabilities are not explicitly replaced with taboo states but constraints are imposed on these transition probabilities via solving a constrained quadratic program. Therefore, compared to \cite{tessler2018reward}, training does not rely on non-convex optimization or additional heuristics. Safety in our approach is guaranteed by the careful selection of constraints and the success of the optimization. 

The key ingredient in our work is the Neural Tangent Kernel (NTK, \cite{jacot2018neural}).
"It the evolution of fully connected neural networks under gradient descent in function space. Dual to this perspective is an understanding of how neural networks evolve in parameter space, since the NTK is defined in terms of the gradient of the NN's outputs with respect to its parameters. In the infinite width limit, the connection between these two perspectives becomes especially interesting. The NTK remaining constant throughout training at large widths co-occurs with the NN being well described throughout training by its first order Taylor expansion around its parameters at initialization" \cite{sohl2020infinite}. 
In this work, we exploit the NTK to compute the future policy using the above properties (fully connected, very wide NN, slow learning rate).

Earlier, policy iteration has been used in conjunction with NTK for learning the value function \cite{goumiri2020reinforcement}. On the other hand, it was used in its analytical form as the covariance kernel of a GP (\cite{novak2019neural, rasmussen2003gaussian, yang2019fine}). Here, we directly use the NTK to project a one-step policy change in conjunction with the policy gradient theorem.
Then, safety is incorporated via constraints. It is assumed that there are states where the agent's desired behaviour is known. At these "safe" states action probabilities are prescribed as constraints for the policy. Finally, returns are computed in such a way that the agent satisfies the constraints. This assumption is mild when considering physical systems: limits of a controlled system (saturation) or desired behavior at certain states are usually known (i.e.,~the environment is considered a gray-box). The proposed algorithm is developed for continuous state spaces and discrete action spaces. According to the proposed categorization in \cite{garcia2015comprehensive}, the proposed algorithm falls into constrained optimization with external knowledge. 

The contribution of the paper is twofold. First, we analytically give the policy evolution under gradient flow, using the NTK. Second, we extend the REINFORCE algorithm with constraints. Our variant of the REINFORCE algorithm converges within a few episodes if constraints are set up correctly. The constrained extension relies on computing extra returns via convex optimization. In conclusion, the paper provides a practical use of the neural tangent kernel in reinforcement learning.

The paper is organized as follows. First, we present the episode-by-episode policy change of the REINFORCE algorithm (Section \ref{sec:REINFORCE}). Then, relying on the NTK, we deduce the policy change at unvisited states, see Section \ref{sec:anypoint}. Using the results in Section \ref{sec:anypoint}, we compute returns at arbitrary states in Section \ref{sec:constr}. We introduce equality constraints for the policy by computing "safe" returns by solving a system of linear equations (Section \ref{sec:eq_constr}). In the same context, we can enforce inequality constraints by solving a constrained quadratic program, see Section \ref{sec:ineq_constr}. Additionally, we explore various ways to enforce constraints on a whole region (Section \ref{sec:regional_constraints}). In Section \ref{sec:cartpole} we investigate the proposed learning algorithm in two OpenAI gym environments: in the Cartpole environment and in the Lunar Lander (Section \ref{sec:lunarlander}). Finally, we summarize the findings of this paper in Section \ref{sec:concl}.

\section{Kernel-based analysis of the REINFORCE algorithm}
\label{sec:learning}
In this section, the episode-by-episode learning dynamics of a policy network is analyzed and controlled in a constrained way. To this end, first we introduce the RL framework and the REINFORCE algorithm. Then, the learning dynamics of a wide and shallow neural network is analytically given. Finally, returns are calculated that force the policy to obey equality and inequality constraints at specific states.

\subsection{Reformulating the learning dynamics of the REINFORCE algorithm}
\label{sec:REINFORCE}

Reinforcement learning problems are commonly introduced in a Markov Decision Process (MDP) setting \cite{sutton2018reinforcement}. Similarly, the most common way of tackling safe RL is through constrained MDPs. I.e.,~safety is ensured via constraining the MDP: at given states some actions that are deemed unsafe are forbidden \cite{altman1999constrained, wachi2020safe}. 

Let the 5-tuple $(\mathcal{S}, \mathcal{A}, \underline{\underline{P}}, \mathcal R(s), \gamma)$ characterize an MDP. This tuple consists of the continuous state space with $n_n$ dimensions $\mathcal{S} \subseteq \mathbb R^{n_n}$, the discrete action space $\mathcal{A} \subset \mathbb{Z}^{n_a}$, the transition probability matrix $\underline{\underline{P}}$, the reward function $\mathcal R(s) \in \mathbb{R}$, and the discount factor $\gamma \in ]0,1]$. The agent traverses the MDP following the policy $\pi(a \mid s, \theta)$.

In the reinforcement learning setup, the goal is maximizing the (discounted sum of future) rewards in episode $e$
\begin{equation}
    J(\theta) = \mathbb{E}_\pi\left(G_e\right).
\end{equation}
Then, policy gradient methods learn by applying the policy gradient theorem.
\begin{theorem}{\textbf{Policy Gradient Theorem. \cite{sutton2018reinforcement}}}
The derivative of the expected reward is the expectation of the product of the reward and gradient of the log of the policy 
\begin{equation}
\label{eq:pg_theorem}
    \frac{\partial }{\partial \theta}  \mathbb{E}_\pi\left(G_e\right) =  \mathbb{E}_\pi\left(G_e \frac{\partial }{\partial \theta} log  \pi(a \mid s, \theta)\right).
\end{equation}
\end{theorem}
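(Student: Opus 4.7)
My plan is to reduce the theorem to the score-function (log-derivative) identity applied to the trajectory density induced by the policy. First, I would make the expectation concrete by writing $J(\theta) = \mathbb{E}_\pi(G_e)$ as an integral over full episode trajectories $\tau = (s_0, a_0, s_1, a_1, \ldots, s_T)$,
\begin{equation}
J(\theta) \;=\; \int p_\theta(\tau)\, G_e(\tau)\, d\tau, \qquad p_\theta(\tau) \;=\; p(s_0) \prod_{t=0}^{T-1} \pi(a_t \mid s_t, \theta)\,\underline{\underline{P}}(s_{t+1} \mid s_t, a_t),
\end{equation}
so that the only $\theta$-dependence of $p_\theta$ enters through the policy factors. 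Differentiating under the integral sign then reduces the problem to evaluating $\nabla_\theta p_\theta(\tau)$.

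The central step is the log-derivative trick $\nabla_\theta p_\theta = p_\theta\,\nabla_\theta \log p_\theta$, which immediately restores an expectation form. Taking the logarithm of $p_\theta(\tau)$ splits it into an initial-state term, transition-kernel terms, and policy terms; since the first two are $\theta$-free, only the policy terms survive the gradient, giving
\begin{equation}
\nabla_\theta \log p_\theta(\tau) \;=\; \sum_{t=0}^{T-1} \nabla_\theta \log \pi(a_t \mid s_t, \theta).
\end{equation}
Substituting this back into the differentiated integral yields
\begin{equation}
\nabla_\theta J(\theta) \;=\; \mathbb{E}_\pi\!\left( G_e \sum_{t=0}^{T-1} \nabla_\theta \log \pi(a_t \mid s_t, \theta) \right),
\end{equation}
which is exactly \eqref{eq:pg_theorem} once the implicit sum along the sampled trajectory is folded into $\mathbb{E}_\pi$.

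The only technical obstacle worth flagging is the interchange of $\nabla_\theta$ and the integral; I would discharge this by assuming bounded per-step rewards (so that $|G_e|$ is dominated either by $T\cdot R_{\max}$ in the finite-horizon case or by $R_{\max}/(1-\gamma)$ for $\gamma<1$) together with $C^1$ dependence of $\pi$ on $\theta$, after which a standard dominated-convergence argument applies. Conceptually, the non-routine observation is that $\underline{\underline{P}}$ drops out of the gradient because it does not depend on $\theta$ — this is precisely what makes the policy gradient estimable from rolled-out samples without an explicit model of the environment, and it is the property that the rest of the paper will exploit when combining the REINFORCE update with the NTK analysis.
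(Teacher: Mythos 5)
Your derivation is correct. Note that the paper itself offers no proof of this statement: it is quoted as a known result with a citation to \cite{sutton2018reinforcement}, so there is no in-paper argument to compare against. What you give is the trajectory-level likelihood-ratio (score-function) derivation: write $J(\theta)$ as an integral of $G_e(\tau)$ against the trajectory density, apply $\nabla_\theta p_\theta = p_\theta \nabla_\theta \log p_\theta$, and observe that the initial-state and transition factors are $\theta$-free so only the policy terms survive. This is a legitimate and arguably more elementary route than the one in the cited textbook, which instead proves the theorem by unrolling the recursion for $\nabla_\theta v_\pi(s)$ through the action-value function and the on-policy state-visitation measure; the textbook route buys the per-state form of the gradient (an expectation over the stationary/visitation distribution of a single $(s,a)$ pair, which is literally how Eq.~\eqref{eq:pg_theorem} is typeset), while your route lands directly on the Monte-Carlo-sampled sum over a rolled-out episode, which is exactly the form the REINFORCE update \eqref{eq:gradientascent} uses. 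Two minor points worth tightening: (i) the identification of your sum-over-timesteps expression with the paper's single-$(s,a)$ notation deserves one explicit sentence (the paper's $\mathbb{E}_\pi$ silently absorbs both the trajectory distribution and the time index); and (ii) the algorithm actually uses the reward-to-go $G_e(k)$ of Eq.~\eqref{eq:return} rather than the full episode return multiplying every term — this refinement follows from your expression by noting that rewards accrued before step $t$ are independent of $a_t$ given $s_t$ and hence contribute zero in expectation, and stating that would close the gap between your final display and the update rule the paper builds on. Your handling of the differentiation-under-the-integral issue via boundedness of $G_e$ and smoothness of $\pi$ in $\theta$ is appropriate.
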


Learning means tuning the weights of a function approximator (agent) episode-by-episode. In most cases the function approximator is a Neural Network \cite{arulkumaran2017deep, silver2014deterministic}. If the training is successful, then the function approximator predicts values or probabilities from which the policy can be derived. More specifically, the output of the policy gradient algorithm is a probability distribution, which is characterized by the function approximator’s weights. 

In this paper we deal with the a simple policy gradient methods called the REINFORCE algorithm \cite{williams1987class, szepesvari2010algorithms}.
In REINFORCE, the agent learns the policy directly by updating its weights $\theta(e)$ using Monte-Carlo episode samples. Thus, the expectation in Eq.~\eqref{eq:pg_theorem} turns into summation. One realization is generated with the current policy (at episode $e$) $\pi(a_e \mid s_e, \theta(e))$. Assuming continuous state space, and discrete action space the episode batch (with length  $n_B$) is $\{\underline{s}_e(k), \underline{a}_e(k), \underline{r}_e(k)\}$, for all $k = 1,2, ..., n_{B}$, where $\underline{s}_e(k) \in \mathcal{S}$ is the $n_n$ dimensional state vector in the $k^{th}$ step of the MC trajectory, $\underline{a}_e(k) \in \mathcal{A}$ is the action taken, and $\underline{r}_e(k) \in \mathcal R(s)$ is the reward in step $k$. For convenience, the states, actions, and rewards in batch $e$ are organized into columns $\underline{s}_e \in \mathbb{R}^{n_B \times n_n}$, $\underline{a}_e \in \mathbb{Z}^{n_B}$, $\underline{r}_e \in \mathbb{R}^{n_B}$, respectively. The REINFORCE algorithm learns as summarized in Algorithm \ref{alg:REINFORCE}. The update rule is based on the policy gradient theorem \cite{sutton2018reinforcement} and for the whole episode it can be written as the sum of gradients induced by the batch:
\begin{equation}
\label{eq:gradientascent}
\theta(e+1)\! =\! \theta(e) + \alpha  \!\sum_{k = 1}^{n_{B}}\!\! \left( \! G_e(k) \frac{\partial}{\partial \theta} log \pi(\underline{a}_e(k)\mid\underline{s}_e(k),\theta(e))\! \right)^T
\end{equation}

\begin{algorithm}[htb!]
%\SetAlgoNoEnd
%\SetAlgoNoLine
\caption{The REINFORCE algorithm}
\begin{algorithmic}[1]
\State Initialize $e = 1$.
\State Initialize the policy network with random $\theta$ weights.
\While{not converged}
\State {Generate a Monte-Carlo trajectory $\{\underline{s}_e(k), \underline{a}_e(k), \underline{r}_e(k)\}$, 

$k = 1,2, ..., n_{B}$ with the current policy $\pi(\underline{a}_e(k) \mid \underline{s}_e(k), \theta(e))$. }
\For{the whole MC trajectory ($k=1,2,...,n_{B}$)} \State Estimate the return as 
        \begin{equation}
        \label{eq:return}
            G_e(k) = \sum_{\kappa =k+1}^{n_{B}} \gamma^{\kappa-k}\underline{r}_e(\kappa) \in \mathbb{R}
        \end{equation} 
        \State Update policy parameters with gradient ascent according to Eq.~\eqref{eq:gradientascent}.
        \EndFor
\State Increment $e$.
\EndWhile
\end{algorithmic}
\label{alg:REINFORCE}
\end{algorithm}
%\begin{enumerate}
%    \item Initialize the policy network with random $\theta$ weights.
%    \item Generate a Monte-Carlo trajectory $\{\underline{s}_e(k), \underline{a}_e(k), \underline{r}_e(k)\}$, $k = 1,2, ..., n_{B}$ with the current policy $\pi(a_e \mid s_e, \theta(e))$.
%    \item Over the whole MC trajectory (for $k=1,2,...,n_{B}$):
%    \begin{enumerate}
%        \item Estimate the return as 
%        \begin{equation}
%        \label{eq:return}
%            G_e(k) = \sum_{\kappa =k+1}^{n_{B}} \gamma^{\kappa-k}\underline{r}_e(\kappa) \in \mathbb{R}
%        \end{equation} 
%        with $\gamma \in [0,1)$ being the discount factor.
%        \item Update policy parameters with gradient ascent (Eq.~\eqref{eq:gradientascent}).
%    \end{enumerate}
%    \item Increment $e$ and go to step 2.
%\end{enumerate}

Taking Algorithm \ref{alg:REINFORCE} further, it is possible to compute the episodic policy change $\frac{\partial\underline \pi(\underline a_e \mid \underline s_e, \theta(e))}{\partial e}$ with respect to every element in batch $e$, assuming gradient flow ($\alpha\rightarrow 0$). Note that $\underline \pi(\underline a_e \mid \underline s_e, \theta(e))$ is now vector-valued since we are dealing with the whole batch.

\begin{theorem}
\label{theo:learningdynamics}
Given batch $\{\underline{s}_e(k), \underline{a}_e(k), \underline{r}_e(k)\}_{k=1}^{n_B}$, and assuming gradient flow, the episodic policy change with the REINFORCE algorithm at the batch state-action pairs are 
\begin{equation*}
%\label{eq:pi_nonlin}
\frac{\partial\underline \pi(\underline a_e \mid \underline s_e, \theta(e))}{\partial e} =  \underline {\underline {\Theta}}_{\pi,e}(s,s') \underline{\underline{\Pi}}^I_e(\underline{s}_e, \underline{a}_e, \theta(e))\underline G_e(\underline r_e),
\end{equation*}
where $\underline {\underline {\Theta}}_{\pi,e}(s,s') \in \mathbb{R}^{n_B \times n_B}$ is the neural tangent kernel, $\underline{\underline{\Pi}}^I_e(\underline{s}_e, \underline{a}_e, \theta(e)) \in \mathbb{R}^{n_B \times n_B}$ is a diagonal matrix containing the inverse policies (if they exist) at state-action pairs of batch $e$, and $\underline G_e(\underline r_e) \in \mathbb{R}^{n_B}$ is the vector of returns.
\end{theorem}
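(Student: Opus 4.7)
\medskip
\noindent\textbf{Proof plan.} The plan is to apply the chain rule to $\underline{\pi}(\underline a_e\mid\underline s_e,\theta(e))$ in the time-like variable $e$, substitute the gradient flow limit of the REINFORCE weight update (\ref{eq:gradientascent}), and recognize the resulting Gram matrix of policy gradients as the neural tangent kernel evaluated at the batch state--action pairs.

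\medskip
\noindent\textbf{Step 1: pass to gradient flow.} First I would take the limit $\alpha\to 0$ in the discrete REINFORCE update (\ref{eq:gradientascent}), so that the episode index $e$ becomes a continuous parameter and the parameter trajectory satisfies the ODE
\begin{equation*}
\frac{\partial\theta(e)}{\partial e}
=\sum_{j=1}^{n_B} G_e(j)\,\left(\frac{\partial}{\partial\theta}\log\pi(\underline a_e(j)\mid\underline s_e(j),\theta(e))\right)^T.
\end{equation*}
This is the standard identification that underlies NTK analyses: in the lazy/slow learning regime the discrete gradient ascent is well approximated by its gradient flow.

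\medskip
\noindent\textbf{Step 2: chain rule and log-derivative trick.} For any fixed state--action pair $(\underline s_e(k),\underline a_e(k))$ in the batch, the chain rule gives
\begin{equation*}
\frac{\partial\pi(\underline a_e(k)\mid\underline s_e(k),\theta(e))}{\partial e}
=\frac{\partial\pi(\underline a_e(k)\mid\underline s_e(k),\theta(e))}{\partial\theta}\,\frac{\partial\theta(e)}{\partial e}.
\end{equation*}
Then I would use the log-derivative identity
$\frac{\partial}{\partial\theta}\log\pi = \pi^{-1}\frac{\partial}{\partial\theta}\pi$,
valid whenever $\pi(\underline a_e(j)\mid\underline s_e(j),\theta(e))>0$, to rewrite the gradient-flow ODE of Step~1 as
\begin{equation*}
\frac{\partial\theta(e)}{\partial e}
=\sum_{j=1}^{n_B}\frac{G_e(j)}{\pi(\underline a_e(j)\mid\underline s_e(j),\theta(e))}\left(\frac{\partial\pi(\underline a_e(j)\mid\underline s_e(j),\theta(e))}{\partial\theta}\right)^T.
\end{equation*}
Substituting back yields a sum over $j$ whose $(k,j)$-th entry is exactly the inner product of policy-output Jacobians evaluated at the two batch points.

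\medskip
\noindent\textbf{Step 3: assembly in matrix form.} I would then collect everything in matrix notation. Defining
\begin{equation*}
\bigl[\underline{\underline{\Theta}}_{\pi,e}\bigr]_{kj}=\frac{\partial\pi(\underline a_e(k)\mid\underline s_e(k),\theta(e))}{\partial\theta}\left(\frac{\partial\pi(\underline a_e(j)\mid\underline s_e(j),\theta(e))}{\partial\theta}\right)^T,
\end{equation*}
which is the NTK evaluated on the batch, and the diagonal matrix $\underline{\underline{\Pi}}^I_e$ with entries $\pi(\underline a_e(j)\mid\underline s_e(j),\theta(e))^{-1}$, together with the return vector $\underline G_e(\underline r_e)$, the $k$-th component of the previous display becomes the $k$-th component of $\underline{\underline{\Theta}}_{\pi,e}\,\underline{\underline{\Pi}}^I_e\,\underline G_e$, which is the claimed identity.

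\medskip
\noindent\textbf{Expected main obstacle.} The algebra is a short chain-rule computation; the substantive step is really the gradient-flow justification of Step~1, i.e.\ arguing that for a wide, shallow network with small enough $\alpha$ the per-episode change $\theta(e+1)-\theta(e)$ divided by $\alpha$ may be identified with a continuous-time derivative and that the NTK stays essentially constant along this trajectory so that the formula can be used predictively. The other caveat worth flagging is the invertibility hypothesis on $\underline{\underline{\Pi}}^I_e$, which just amounts to requiring strictly positive action probabilities at the visited state--action pairs of the batch---a mild condition for a softmax policy.
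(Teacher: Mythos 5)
Your proposal is correct and follows essentially the same route as the paper's proof: pass to gradient flow, apply the chain rule to the policy outputs, use the log-derivative identity to expose the inverse policies, and recognize the Gram matrix of policy Jacobians as the NTK; the paper merely organizes the same computation by factoring the matrix of log-policy derivatives into a Jacobian times a diagonal inverse-policy matrix before invoking the chain rule. Your flagged caveats (strict positivity of the evaluated policy, validity of the gradient-flow approximation for small $\alpha$) match the assumptions the paper states explicitly.
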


\begin{proof}
Assuming very small learning rate $\alpha$, the update algorithm (gradient ascent) can be written in continuous form (gradient flow) \cite{parikh2014proximal}:
\begin{equation}
\frac{d \theta(e)}{de}= \sum_{k = 1}^{n_{B}}\left(G_e(k) \frac{\partial}{\partial \theta} log \pi(\underline{a}_e(k)\mid\underline{s}_e(k),\theta_{e})\right)^T. 
\end{equation}
Furthermore, to avoid division by zero, it is assumed that the evaluated policy is strictly positive. The derivative on the left hand side is a column vector with size $n_\theta$. Denote $\underline s_e = \{\underline{s}_e(k)\}_{k=1}^{n_{B}}$, $\underline a_e = \{\underline{a}_e(k)\}_{k=1}^{n_{B}}$, $\underline r_e = \{\underline{r}_e(k)\}_{k=1}^{n_{B}}$ and rewrite the differential equation in vector form as 
\begin{equation}
\label{eq:PG_vector}
    \frac{d \theta(e)}{de} =  \underline{ \underline{\dot \Pi}}_{log}(\underline s_e, \underline a_e, \theta(e)) \underline G_e(\underline r_e),
\end{equation}
where 
\begin{equation}
   \underline{G}_e(\underline r_e)  = [G_e(1), G_e(2), ..., G_e(n_{B})]^T
\end{equation}
is the vector of returns in episode $e$ (Eq.~\eqref{eq:return}). The matrix of partial log policy derivatives ($\underline{ \underline{\dot \Pi}}_{log}(\underline s_e, \underline a_e, \theta(e)) \in \mathbb{R}^{n_\theta\times n_{B}}$) is
\begin{align}
\nonumber
& \underline{ \underline{\dot \Pi}}_{log}(\underline s_e, \underline a_e, \theta(e)) = \\ 
& \begin{bmatrix}
\frac{\partial log \pi(\underline{a}_e(1)\mid\underline{s}_e(1), \theta(e))}{\partial \theta_1} & %\frac{\partial }{\partial \theta_1} log \pi(\underline{a}_e(2)\mid\underline{s}_e(2), \theta(e)) & 
\! \dots \! & \frac{\partial log \pi(\underline{a}_e({n_{B}})\mid\underline{s}_e({n_{B}}), \theta(e))}{\partial \theta_1} \\ 
\frac{\partial log \pi(\underline{a}_e(1)\mid\underline{s}_e(1), \theta(e))}{\partial \theta_2} & %\frac{\partial }{\partial \theta_2} log \pi(\underline{a}_e(2)\mid\underline{s}_e(2), \theta(e)) &
\! \dots \! & \frac{\partial log \pi(\underline{a}_e({n_{B}})\mid\underline{s}_e({n_{B}}), \theta(e))}{\partial \theta_2} \\ 
\vdots &
%\vdots &
\! \ddots \!  & \vdots\\ 
\frac{\partial log \pi(\underline{a}_e(1)\mid\underline{s}_e(1), \theta(e))}{\partial \theta_{n_\theta}} & %\frac{\partial }{\partial \theta_{n_\theta}} log \pi(\underline{a}_e(2)\mid\underline{s}_e(2), \theta(e)) &
\! \dots \! & \frac{\partial log \pi(\underline{a}_e({n_{B}})\mid\underline{s}_e({n_{B}}), \theta(e))}{\partial \theta_{n_\theta}} 
\end{bmatrix},
\end{align}
where subscripts of $\theta$ denote weights and biases of the policy network.
By using an element-wise transformation $\frac{logf(t)}{dx}=\frac{f'(t)}{f(t)}$,  $\underline{ \underline{\dot \Pi}}_{log}(\underline s_e, \underline a_e, \theta(e))$ can be rewritten as a product:
\begin{align}
\nonumber
   & \underline{ \underline{\dot \Pi}}_{log}(\underline s_e, \underline a_e, \theta(e)) =  \\ 
& \begin{bmatrix}
\frac{\partial \pi(\underline{a}_e(1)\mid\underline{s}_e(1), \theta(e))}{\partial \theta_1} & 
\! \dots \!  & \frac{\partial \pi(\underline{a}_e({n_{B}})\mid\underline{s}_e({n_{B}}), \theta(e))}{\partial \theta_1} \\ 
\frac{\partial \pi(\underline{a}_e(1)\mid\underline{s}_e(1), \theta(e))}{\partial \theta_2} & 
\! \dots \! & \frac{\partial \pi(\underline{a}_e({n_{B}})\mid\underline{s}_e({n_{B}}), \theta(e))}{\partial \theta_2} \\ 
\vdots & \! \ddots \! & \vdots\\ 
\frac{\partial \pi(\underline{a}_e(1)\mid\underline{s}_e(1), \theta(e))}{\partial \theta_{n_\theta}} & 
\! \dots \! & \frac{\partial \pi(\underline{a}_e({n_{B}})\mid\underline{s}_e({n_{B}}), \theta(e))}{\partial \theta_{n_\theta}} 
\end{bmatrix} \cdot \nonumber \\
& \cdot \begin{bmatrix}
\frac{1}{\pi(\underline{a}_e(1)\mid\underline{s}_e(1), \theta(e))} & \dots  & 0\\ 
\vdots & \ddots  & \vdots\\ 
0 & \dots  & \frac{1}{\pi(\underline{a}_e({n_{B}})\mid\underline{s}_e({n_{B}}), \theta(e))}
\end{bmatrix},
\label{eq:pilog2}
\end{align}
where the matrix on the left is a transposed Jacobian, i.e.,~$\underline{ \underline{\dot \Pi}}(\underline s_e, \underline a_e, \theta(e)) = \left(\frac{\partial}{\partial \theta} \underline \pi(\underline a_e \mid \underline s_e, \theta(e))\right)^T$. Denote the inverse policies $\frac{1}{\pi(\underline{a}_e(k)\mid\underline{s}_e(k), \theta(e))}$ with $\pi^I_e(k)$ and $\underline \pi^I_e(\underline s_e, \underline a_e, \theta(e)) = \{\pi^I_e(k)\}_{k=1}^{n_{B}} \in \mathbb{R}^{n_B}$. The change of the agent weights based on batch $e$ is
\begin{equation}
\label{eq:theta_nonlin}
    \frac{d \theta(e)}{de }  =  
    %\underline{ \underline{\dot \Pi}}(\underline s_e, \underline a_e, \theta(e)) diag(\underline \pi^I_e(\underline s_e, \underline a_e, \theta(e))) \underline G_e(\underline r_e) =  
    \underline{ \underline{\dot \Pi}}(\underline s_e, \underline a_e, \theta(e))  \underline {\underline {\Gamma} }_e(\underline r_e) \underline \pi^I_e(\underline s_e, \underline a_e, \theta(e))
\end{equation}
with $\underline {\underline {\Gamma} }_e(\underline r_e)= diag(\underline G_e(\underline r_e) ) \in \mathbb{R}^{n_B \times n_B}$.
Next, write the learning dynamics of the policy, using the chain rule:
\begin{equation}
    \frac{\partial\underline \pi(\underline a_e \mid\underline s_e, \theta(e))}{\partial e} = \frac{\partial}{\partial \theta} \underline \pi(\underline a_e \mid\underline s_e,\theta(e))   \frac{d\theta(e)}{de}.
\end{equation}
First, extract $\frac{d\theta(e)}{de}$ as in Eq.~\eqref{eq:theta_nonlin}:
\begin{align}
\nonumber
    & \frac{\partial\underline \pi(\underline a_e \mid\underline s_e,\theta(e))}{\partial e} =  \\ 
    &\underline{ \underline{\dot \Pi}}(\underline s_e, \underline a_e, \theta(e))^T \underline{ \underline{\dot \Pi}}(\underline s_e, \underline a_e, \theta(e)) \underline {\underline {\Gamma} }_e(\underline r_e) \underline \pi^I_e(\underline s_e, \underline a_e, \theta(e)).
\end{align}
Note that $\underline{ \underline{\dot \Pi}}(\underline s_e, \underline a_e, \theta(e))^T \underline{ \underline{\dot \Pi}}(\underline s_e, \underline a_e, \theta(e)) = \frac{\partial}{\partial \theta} \underline \pi(\underline a_e \mid \underline s_e, \theta(e)) \left(\frac{\partial}{\partial \theta} \underline \pi(\underline a_e \mid \underline s_e, \theta(e))\right)^T$ is the Neural Tangent Kernel (NTK), as defined in \cite{jacot2018neural}. Denote it with ${\underline {\underline \Theta}}_{\pi,e}(s,s') \in \mathbb{R}^{n_{B} \times n_{B}}$. Finally, the policy update due to episode batch $e$ at states $\underline{s}_e(k)$ for actions $\underline{a}_e(k)$ become:
\begin{equation}
\label{eq:pi_nonlin_0}
    \frac{\partial\underline \pi(\underline a_e \mid \underline s_e, \theta(e))}{\partial e} =  \underline {\underline {\Theta}}_{\pi,e}(s,s') \underline {\underline {\Gamma} }_e \underline \pi^I_e(\underline s_e, \underline a_e, \theta(e)).
\end{equation}
Similarly, by defining $\underline{\underline{\Pi}}^I_e(\underline{s}_e, \underline{a}_e, \theta(e)) = diag(\underline \pi^I_e(\underline s_e, \underline a_e, \theta(e)))$ we can write Eq.~\eqref{eq:pi_nonlin} as
\begin{equation}
\label{eq:pi_nonlin}
    \frac{\partial\underline \pi(\underline a_e \mid \underline s_e, \theta(e))}{\partial e} =  \underline {\underline {\Theta}}_{\pi,e}(s,s') \underline{\underline{\Pi}}^I_e(\underline{s}_e, \underline{a}_e, \theta(e))\underline G_e(\underline r_e)
\end{equation}
too. 
\end{proof}
\begin{remark}
From Eq.~\eqref{eq:pi_nonlin_0}, the learning dynamics of the REINFORCE algorithm is a nonlinear differential equation system. If the same data batch is shown to the neural network over and over again, the policy evolves as $\dot{x}(\tau) = \beta \frac{1}{x(\tau)}$. 
\end{remark}

\subsection{Evaluating the policy change for arbitrary states and actions}
\label{sec:anypoint}
In this section, we describe the policy change for any state and any action if the agent learns from batch $e$. In most cases, the learning agent can perform multiple actions. Assume the agent can take $a^1, a^2, ..., a^{n_A}$ actions (the policy net has $n_A$ output channels). Previous works that deal with NTK all consider one output channel in the examples (e.g.,~\cite{jax2018github, jacot2018neural, yang2019fine}), however state that it works for multiple outputs too. \cite{jacot2018neural} claim that a network with $n_A$ outputs can be handled as $n_A$ independent neural networks. On the other hand, it is possible to fit every output channel into one equation by generalizing Theorem \ref{theo:learningdynamics}.
First, reconstruct the return vector $\underline G_e$ as follows.
\begin{equation}
\underline G_e(\underline a_e, \underline r_e ) = [G_e^{a_1}(1), G_e^{a_2}(1), ...,  G_e^{a_{n_A}}(n_{B})]^T \in \mathbb{R}^{n_An_{B}},
\end{equation}
where 
\begin{equation}
\label{eq:G_cases}
G_e^{a_i}(k) = 
\begin{cases}
G_e(k) & \text{if } a^i \text{ is taken at step } k \\
0 & \text{otherwise}
\end{cases}
\end{equation}
and $i = 1,2,...n_A$. In other words, $\underline G_e(\underline a_e, \underline r_e )$ consists of $n_A \times 1$ sized blocks with zeros at action indexes which are not taken at $\underline s_e(k)$, ($n_A-1$ zeros) and the original return (Eq.~\eqref{eq:return}) at the position of the taken action. Note that action dependency of $\underline G_e(\underline a_e, \underline r_e )$ stems from this assumption.

\begin{theorem}
\label{theo:learningdynamics_multi}
Given batch $\{\underline{s}_e(k), \underline{a}_e(k), \underline{r}_e(k)\}_{k=1}^{n_B}$, and assuming gradient flow, the episodic policy change with the REINFORCE algorithm at the batch states for an $n_A$ output policy network is  
\begin{equation*}
    \frac{\partial\underline \pi(\underline a \mid \underline s_e, \theta(e))}{\partial e} =  \underline{\underline{\Theta}}_{\pi,e}(s,s')\underline{\underline{\Pi}}^I_e(\underline{s}_e, \theta(e)) \underline G_e(\underline a_e, \underline r_e),
\end{equation*}
where $\frac{\partial\underline \pi(\underline a \mid \underline s_e, \theta(e))}{\partial e} \in\mathbb{R}^{n_An_{B}}$ and $\underline{\underline{\Theta}}_{\pi,e}(s,s') \in\mathbb{R}^{n_An_{B}\times n_An_{B}}$. 
\end{theorem}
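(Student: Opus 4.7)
The proof will mirror the derivation of Theorem~\ref{theo:learningdynamics}, with the bookkeeping needed for the full action space made explicit. The central idea is to stack every output channel of the policy network, for every batch state, into a single vector $\underline\pi(\underline a\mid\underline s_e,\theta(e))\in\mathbb{R}^{n_A n_B}$, and to treat the parameter Jacobian $\frac{\partial}{\partial\theta}\underline\pi(\underline a\mid\underline s_e,\theta(e))\in\mathbb{R}^{n_A n_B\times n_\theta}$ as describing all output dimensions simultaneously. Under this stacking, the multi-output NTK is naturally $\underline{\underline{\Theta}}_{\pi,e}(s,s')=\frac{\partial\underline\pi}{\partial\theta}\bigl(\frac{\partial\underline\pi}{\partial\theta}\bigr)^{T}\in\mathbb{R}^{n_A n_B\times n_A n_B}$, consistent with the multi-output NTK construction mentioned in \cite{jacot2018neural}.

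First I would rewrite the gradient-flow equation for $\theta$. In the scalar-action derivation, only the log-policy of the \emph{taken} action at each step contributes to the sum in Eq.~\eqref{eq:gradientascent}, so $\frac{d\theta}{de}$ is a sum over $n_B$ terms. To lift this into the $n_A n_B$-dimensional setting, I would insert $n_A-1$ zeros around each taken-action term, which leaves the sum numerically unchanged; this is exactly the construction in Eq.~\eqref{eq:G_cases}. Consequently $\frac{d\theta}{de}$ can equivalently be written as $\underline{\underline{\dot\Pi}}(\underline s_e,\theta(e))\,\underline{\underline{\Pi}}^I_e(\underline s_e,\theta(e))\,\underline G_e(\underline a_e,\underline r_e)$, where now the Jacobian and the inverse-policy matrices carry entries for every action at every batch state. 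The zero-padding in $\underline G_e(\underline a_e,\underline r_e)$ annihilates the columns of the expanded Jacobian corresponding to non-taken actions, which is what keeps the update identical to the original REINFORCE update.

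Next, I would apply the chain rule $\frac{\partial\underline\pi}{\partial e}=\frac{\partial\underline\pi}{\partial\theta}\frac{d\theta}{de}$ as in the proof of Theorem~\ref{theo:learningdynamics}. Substituting the expression for $\frac{d\theta}{de}$ derived above and grouping the outer Jacobian with the transposed Jacobian inside $\underline{\underline{\dot\Pi}}$, the product $\frac{\partial\underline\pi}{\partial\theta}\bigl(\frac{\partial\underline\pi}{\partial\theta}\bigr)^{T}$ is identified as $\underline{\underline{\Theta}}_{\pi,e}(s,s')$, and the claimed identity follows immediately.

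The main obstacle I anticipate is the bookkeeping step: one must verify that the zero-padding convention in Eq.~\eqref{eq:G_cases} is compatible with the block structure of the expanded Jacobian, so that every column belonging to a non-taken action is neutralized by a zero in $\underline G_e(\underline a_e,\underline r_e)$ before it ever multiplies the (possibly ill-defined) inverse $1/\pi$ of an action that was not sampled. Since $\underline{\underline{\Pi}}^I_e$ is diagonal, this amounts to checking index-by-index that $G_e^{a_i}(k)\neq 0$ only when $a^i$ is the action actually taken at step $k$, whose probability is assumed strictly positive as in the proof of Theorem~\ref{theo:learningdynamics}. Once this alignment is established, the rest of the proof is a one-to-one translation of the single-output argument, so no new calculations beyond those of Theorem~\ref{theo:learningdynamics} are required.
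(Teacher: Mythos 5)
Your proposal is correct and follows essentially the same route as the paper's proof: expand the log-policy derivative matrix to all $n_A$ output channels at each batch state, use the zero-padded return vector of Eq.~\eqref{eq:G_cases} to cancel the contributions of non-taken actions so that $\frac{d\theta}{de}$ is unchanged, factor into the Jacobian times the diagonal inverse-policy matrix, and then apply the chain rule to identify the $n_A n_B \times n_A n_B$ NTK exactly as in Theorem~\ref{theo:learningdynamics}. Your additional check that the zero entries of $\underline G_e(\underline a_e,\underline r_e)$ align with (and neutralize) the inverse policies of non-sampled actions is a point the paper states only in passing, but it does not change the argument.
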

\begin{proof}
We can deduce the multi-output case by rewriting Eq.~\eqref{eq:PG_vector}. For simplicity, we keep the notations, but the matrix and vector sizes are redefined for the vector output case. Therefore, the log policy derivatives are evaluated for every possible action at the states contained in a batch: 
\begin{align}
\nonumber
    &\underline{\underline{\dot \Pi}}_{log}(\underline{s}_e, \theta(e)) = \\
    &\begin{bmatrix}
\frac{\partial log \pi(a^{1}\mid\underline{s}_e(1), \theta(e))}{\partial \theta_1} &
\! \dots \! & \frac{\partial \pi({a}^{n_A}\mid\underline{s}_e({n_{B}}), \theta(e))}{\partial \theta_1} log \\
\frac{\partial }{\partial \theta_2} log \pi(a^{1}\mid\underline{s}_e(1), \theta(e))&
\! \dots \!  & \frac{\partial log \pi({a}^{n_A}\mid\underline{s}_e({n_{B}}), \theta(e))}{\partial \theta_2} \\
\vdots  & \! \ddots \! & \vdots\\ 
\frac{\partial log \pi(a^{1}\mid\underline{s}_e(1), \theta(e))}{\partial \theta_{n_\theta}} &
\!\dots \! & \frac{\partial log \pi({a}^{n_A}\mid\underline{s}_e({n_{B}}), \theta(e))}{\partial \theta_{n_\theta}} \\
\end{bmatrix},
\end{align}
$\underline{\underline{\dot \Pi}}_{log}(\underline{s}_e, \theta(e)) \in \mathbb{R}^{n_\theta \times n_An_{B}}$. The zero elements in $\underline G_e(\underline a_e, \underline r_e)$ will cancel out log probabilities of actions that are not taken in episode $e$, see Eq.~\eqref{eq:G_cases}. Therefore, the final output $\frac{\partial\underline \pi(\underline a_e \mid \underline s_e, \theta(e))}{\partial e}$ does not change. Note that, the action dependency is moved from $\underline{\underline{\dot \Pi}}_{log}(\underline{s}_e, \theta(e))$ to $\underline G_e(\underline a_e, \underline r_e )$. That is, because the policy is evaluated for every output channel of the policy network, but nonzero reward is given only if an action is actually taken in the MC trajectory. Continue by separating $\underline{\underline{\dot \Pi}}_{log}(\underline{s}_e, \theta(e))$ into two matrices, in the same way as in Eq.~\eqref{eq:pilog2}.
\begin{align}
\nonumber
& \underline{\underline{\dot \Pi}}_{log}(\underline{s}_e, \theta(e)) =  \underline{\underline{\dot \Pi}}(\underline{s}_e, \theta(e))\underline{\underline{\Pi}}^I_e(\underline{s}_e, \theta(e)) = \\ \nonumber
 & \!\!\! \begin{bmatrix}
    \frac{\partial \pi(a^{1}\mid\underline{s}_e(1), \theta(e))}{\partial \theta_1} \!\! & \!\!
    \frac{\partial \pi(a^{2}\mid\underline{s}_e(1), \theta(e))}{\partial \theta_1}  &
    \!\!\! \dots \!\!\! & \frac{\partial \pi({a}^{n_A}\mid\underline{s}_e({n_{B}}), \theta(e))}{\partial \theta_1} \\
    \frac{\partial \pi(a^{1}\mid\underline{s}_e(1), \theta(e))}{\partial \theta_2} \!\! & \!\!
    \frac{\partial \pi(a^{2}\mid\underline{s}_e(1), \theta(e))}{\partial \theta_2}  &
    \!\!\! \dots \!\!\! & \frac{\partial \pi({a}^{n_A}\mid\underline{s}_e({n_{B}}), \theta(e))}{\partial \theta_2} \\
    \vdots &  \vdots & \!\!\!\ddots \!\!\! & \vdots\\ 
    \frac{\partial \pi(a^{1}\mid\underline{s}_e(1), \theta(e))}{\partial \theta_{n_\theta}} \!\! & \!\!
    \frac{\partial \pi(a^{2}\mid\underline{s}_e(1), \theta(e))}{\partial \theta_{n_\theta}}  &
    \!\!\! \dots \!\!\! & \frac{\partial \pi({a}^{n_A}\mid\underline{s}_e({n_{B}}), \theta(e))}{\partial \theta_{n_\theta}} 
    \end{bmatrix} \! \! \cdot \\
&\cdot \! \begin{bmatrix}
\frac{1 }{ \pi(a^{1}\mid\underline{s}_e(1), \theta(e))} \!& & & \\ 
& \! \frac{1 }{\pi({a}^{2}\mid\underline{s}_e({1})\theta(e))} \! & &\\ 
& & \! \ddots \!& \\ 
& & & \! \frac{1 }{\pi({a}^{n_A}\mid\underline{s}_e({n_{B}}), \theta(e))}
\end{bmatrix}\!,
\end{align}
where the diagonalized inverse policies are denoted with with $\underline{\underline{\Pi}}^I_e(\underline{s}_e, \theta(e))$. The weight change can be written as
\begin{equation}
    \frac{d \theta(e)}{de }  = \underline{\underline{\dot \Pi}}(\underline{s}_e, \theta(e))\underline{\underline{\Pi}}^I_e(\underline{s}_e, \theta(e)) \underline G_e(\underline a_e, \underline r_e).
\end{equation}
Following the same steps as for the proof of Theorem \ref{theo:learningdynamics}, the policy change for every output channel is 
\begin{equation}
\label{eq:pi_nonlin_multi}
    \frac{\partial\underline \pi(\underline a \mid \underline s_e, \theta(e))}{\partial e} =  \underline{\underline{\Theta}}_{\pi,e}(s,s')\underline{\underline{\Pi}}^I_e(\underline{s}_e, \theta(e)) \underline G_e(\underline a_e, \underline r_e).
\end{equation}
\end{proof}

\begin{remark}
It is possible to write the average change of each policy output channel over a batch (with superscript $B$) as 
\begin{align}
\label{eq:batch_change}
\nonumber &
    \frac{\partial\underline \pi^B(\underline a \mid \underline s_e, \theta(e))}{\partial e} = \\ & \frac{1}{n_{B}}\underline{\underline{E}}\;\underline{\underline{\Theta}}_{\pi,e}(s,s')\underline{\underline{\Pi}}^I_e(\underline{s}_e, \theta(e)) \underline G_e(\underline a_e, \underline r_e).
\end{align}
where $\underline{\underline{E}}$ is an $n_A \times n_An_{B}$ matrix consisting of $n_{B}$ eye matrices of size $n_A \times n_A$. $\underline{\underline{E}}$ is used to sum elements that correspond to the same output channel. The result of Eq.~\eqref{eq:batch_change} is the policy change at an averaged state of $\underline s_e$. 
\end{remark} 
With Theorem \ref{theo:learningdynamics_multi}, it is possible to evaluate how the policy will change at states $\underline s_e$ when learning on batch data $e$ if the learning rate is small.
By manipulating Eq.~\eqref{eq:pi_nonlin_multi} policy change can be evaluated at states not part of batch $\underline s_e$ too.
\begin{theorem}
\label{theo:learningdynamics_anywhere}
Given batch $\{\underline{s}_e(k), \underline{a}_e(k), \underline{r}_e(k)\}_{k=1}^{n_B}$, and assuming gradient flow, the episodic policy change with the REINFORCE algorithm at any state $s_0 \notin \underline s_e$ is 
\begin{equation*}
%\label{eq:policy_eval}
\frac{\partial\underline \pi(a \mid s_0, \theta(e))}{\partial e} =  \underline{\underline{\vartheta}}(\underline s_e,s_0) \underline{\underline{\Pi}}^I_e(\underline{s}_e, \theta(e)) \underline G_e(\underline a_e, \underline r_e) \in \mathbb{R}^{n_A},
\end{equation*}
where $\underline{\underline{\vartheta}}(\underline s_e,s_0) = [\underline{\underline{\vartheta}}(s_1,s_0), \underline{\underline{\vartheta}}(s_2,s_0),...,\underline{\underline{\vartheta}}(s_{n_{B}},s_0)] \in \mathbb{R}^{n_A \times n_An_{B}}$ is the neural tangent kernel evaluated for all $ \underline s_e, \; s_0$ pairs. 
\end{theorem}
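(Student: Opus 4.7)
The plan is to follow exactly the template of the proof of Theorem \ref{theo:learningdynamics_multi}, but to decouple the state where the policy is evaluated (the "test" state $s_0$) from the states in the training batch. The key observation is that the weight update derived in the proof of Theorem \ref{theo:learningdynamics_multi}, namely
\begin{equation*}
\frac{d \theta(e)}{de} = \underline{\underline{\dot \Pi}}(\underline{s}_e, \theta(e))\,\underline{\underline{\Pi}}^I_e(\underline{s}_e, \theta(e))\,\underline G_e(\underline a_e, \underline r_e),
\end{equation*}
depends only on the batch data and the current parameters; it is agnostic to where the resulting policy change is subsequently read off. Under gradient flow this is the entire contribution of episode $e$ to $\theta$.

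Next I would apply the chain rule to the vector-valued $\underline \pi(a \mid s_0, \theta(e)) \in \mathbb{R}^{n_A}$ evaluated at the single test state $s_0$,
\begin{equation*}
\frac{\partial \underline \pi(a \mid s_0, \theta(e))}{\partial e} = \frac{\partial}{\partial \theta}\underline \pi(a \mid s_0, \theta(e))\; \frac{d\theta(e)}{de},
\end{equation*}
where the Jacobian on the right has shape $n_A \times n_\theta$. Substituting the weight-update expression above, the only factor that needs to be reinterpreted is the product
\begin{equation*}
\frac{\partial}{\partial \theta}\underline \pi(a \mid s_0, \theta(e))\; \underline{\underline{\dot \Pi}}(\underline{s}_e, \theta(e)) \in \mathbb{R}^{n_A \times n_A n_{B}}.
\end{equation*}
Column-blocking $\underline{\underline{\dot \Pi}}(\underline{s}_e, \theta(e))$ by the batch index $k = 1,\dots,n_B$, the $k$-th $n_A \times n_A$ block of this product is exactly
\begin{equation*}
\underline{\underline{\vartheta}}(s_k,s_0) = \frac{\partial}{\partial \theta}\underline \pi(a \mid s_0, \theta(e))\left(\frac{\partial}{\partial \theta}\underline \pi(a \mid s_k, \theta(e))\right)^T,
\end{equation*}
i.e.\ the NTK between the batch state $s_k$ and the query state $s_0$. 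Concatenating these blocks reproduces $\underline{\underline{\vartheta}}(\underline s_e,s_0)$ as defined in the statement, and the announced identity follows.

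The calculation is essentially bookkeeping; the only real obstacle is making sure that the $n_A \times n_A$ block structure induced by the multi-output generalization in Theorem \ref{theo:learningdynamics_multi} is respected, so that $\underline{\underline{\vartheta}}(\underline s_e,s_0)$, $\underline{\underline{\Pi}}^I_e(\underline{s}_e, \theta(e))$ and $\underline G_e(\underline a_e, \underline r_e)$ are compatible in their partitions. The lazy-training / wide-network assumption that justifies treating the NTK as the kernel of interest (so that higher-order terms in the Taylor expansion of $\underline \pi(a\mid s_0,\theta)$ in $\theta$ vanish) has already been invoked for Theorems \ref{theo:learningdynamics} and \ref{theo:learningdynamics_multi}, so it can be inherited here without further argument.
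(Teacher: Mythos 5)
Your argument is correct and matches the paper's proof in substance: both amount to applying the chain rule to $\underline \pi(a \mid s_0, \theta(e))$ along the batch-determined weight flow $\frac{d\theta(e)}{de}$, so that the cross-kernel $\underline{\underline{\vartheta}}(\underline s_e,s_0)$ emerges as the product of the Jacobian at $s_0$ with the transposed Jacobians at the batch states. The only cosmetic difference is that the paper reaches the same expression by formally augmenting the batch with $s_0$ carrying zero return and then cancelling the resulting self-kernel term $\underline{\underline{\vartheta}}(s_0,s_0)\,\underline{\underline{\Pi}}^I_e(s_0,\theta(e))\,\underline G_0$, whereas you read the policy change off at $s_0$ directly without augmentation.
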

\begin{proof}
First, we concatenate the states and returns $(\underline s_e, s_0)$, $(\underline G_e(\underline a_e, \underline r_e), \underline G_0(a_0, r_0))$, respectively and solve Eq.~\eqref{eq:pi_nonlin_multi} for $s_0$.
For more insight, we illustrate the matrix multiplication in Eq.~\eqref{eq:pi_nonlin_multi} with the concatenated state in Figure \ref{fig:aug_matrix_any}.
\begin{figure}[htb!]
\centering
\includegraphics[scale=0.5]{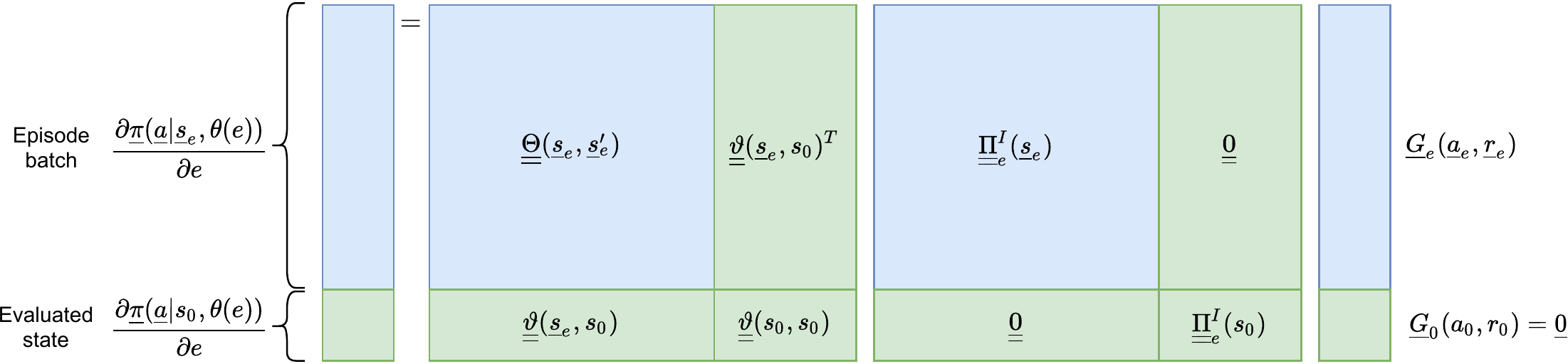}
\caption{Matrix compatibility with the augmented batch. The blue section indicates the original Eq.~\eqref{eq:pi_nonlin_multi}, while the green blocks stem from the augmentation.}
\label{fig:aug_matrix_any}
\end{figure}
\begin{align}
\nonumber
&    \frac{\partial\underline \pi(a \mid s_0, \theta(e))}{\partial e} =  \underline{\underline{\vartheta}}(\underline s_e,s_0) \underline{\underline{\Pi}}^I_e(\underline{s}_e, \theta(e)) \underline G_e(\underline a_e, \underline r_e)  + \\ & + \underline{\underline{\vartheta}}( s_0,s_0) \underline{\underline{\Pi}}^I_e({s}_0, \theta(e)) \underline G_0(\underline a_0, \underline r_0) .
\end{align}
The NTK is based on the partial derivatives of the policy network and can be evaluated anywhere. Therefore, $\underline{\underline{\vartheta}}(\underline s_e,s_0) = [\underline{\underline{\vartheta}}(s_1,s_0), \underline{\underline{\vartheta}}(s_2,s_0),...,\underline{\underline{\vartheta}}(s_{n_{B}},s_0)] \in \mathbb{R}^{n_A \times n_An_{B}}$ can be computed for any state. $\underline{\underline{\vartheta}}(\underline s_e,s_0)$ consists of symmetric $n_A \times n_A$ blocks. 
Since $s_0$ is not included in the learning, it does not affect the policy change, its return is zero for every action, $\underline G_0(a_0, r_0) = \underline 0 \in \mathbb{R}^{n_A}$. The zero return cancels out the term $\underline{\underline{\vartheta}}( s_0,s_0) \underline{\underline{\Pi}}^I_e({s}_0, \theta(e)) \underline G_0( a_0, \underline r_0) \in \mathbb{R}^{n_A}$. Therefore, 
\begin{equation}
\label{eq:policy_eval}
\frac{\partial\underline \pi(a \mid s_0, \theta(e))}{\partial e} =  \underline{\underline{\vartheta}}(\underline s_e,s_0) \underline{\underline{\Pi}}^I_e(\underline{s}_e, \theta(e)) \underline G_e(\underline a_e, \underline r_e) \in \mathbb{R}^{n_A}.
\end{equation}
\end{proof}

\section{The NTK-based constrained REINFORCE algorithm}
\label{sec:constr}
Every physical system has some limits (saturation). Intuitively, the agent should not increase the control action towards the limit if the system is already in saturation (for example, if a robot manipulator is at an end position, increasing the force in the direction of that end position is pointless). 
Assuming we have some idea how the agent should behave (which actions to take) at specific states $\underline s_s = [s_{s1}, s_{s2}, ..., s_{sn_S}]$ or regions $\underline f_s(t)$, equality and inequality constraints can be prescribed for the policy. Define equality and inequality constraints as $\underline{\pi}_{ref,eq}(\underline a_s\mid \underline s_s) = \underline c_{eq}$ and $\underline{\pi}_{ref,ineq}(\underline a_s\mid \underline s_s) \geq \underline c_{ineq}$, where $[\underline c_{eq}, \; \underline c_{ineq}] \in \mathbb{R}^{n_S}$ is a vector constant probabilities. Additionally, suppose that one constrained region in the state space is described with $f_s(t): \mathbb{R} \rightarrow \mathcal{S}$. Then, in the constrained regions we assume to have a reference policy ${\underline \pi}_{ref,reg}(\underline a_s\mid \underline f_s(t)) \geq c_{reg}$, $\forall t \in [t_{min}, \; t_{max}]$.

In the sequel, relying on Theorem \ref{theo:learningdynamics_anywhere}, we provide the mathematical deduction on how to enforce constraints during learning.

\subsection{Equality constraints}
\label{sec:eq_constr}
To deal with constraints, we augment Eq.~\eqref{eq:pi_nonlin_multi} with the constrained state-action pairs. Visually, the policy change at the augmented batch states are shown in Figure \ref{fig:aug_matrix}. Since the desired policy change at the safe states can be given as
\begin{equation}
    \Delta \underline \pi(\underline a_s \mid \underline s_s, \theta(e)) = \underline{\pi}_{ref,eq}(\underline a_s\mid \underline s_s) - \underline{\pi}(\underline a_s\mid \underline s_s, \theta(e)),
\end{equation}
$\Delta \underline \pi(\underline a_s \mid \underline s_s, \theta(e)) \in \mathbb{R}^{n_S}$. The only unknowns are the returns $\underline G_s = [G_{s1}, G_{s2}, ..., G_{sn_S}]$ for the safe actions at the safe states. Note that the upper block of Figure \ref{fig:aug_matrix} contains differential equations, while the lower block consists of algebraic equations. It is sufficient to solve the algebraic part. With the lower blocks of Figure \ref{fig:aug_matrix} we can write the linear equation system
\begin{align}
\label{eq:constrained_system}
\nonumber 
& \Delta\underline \pi(\underline a_s \mid \underline s_s, \theta(e)) -  \underline{\underline{\vartheta}}(\underline s_e, \underline s_s)\underline{\underline{\Pi}}^I_e(\underline{s}_e, \theta(e)) \underline G_e(\underline a_e, \underline r_e) = \\ 
&  \underline{\underline{\vartheta}}(\underline s_s, \underline s_s')\underline{\underline{\Pi}}^I_e(\underline{s}_s, \theta(e))\underline G_s.
\end{align}
This system has a single unique solution as there are $n_S$ unknown returns and $n_S$ equations and it can be solved with e.g.,~the DGSEV algorithm \cite{haidar2018harnessing}.
\begin{figure}[htb!]
\centering
\includegraphics[scale=0.5]{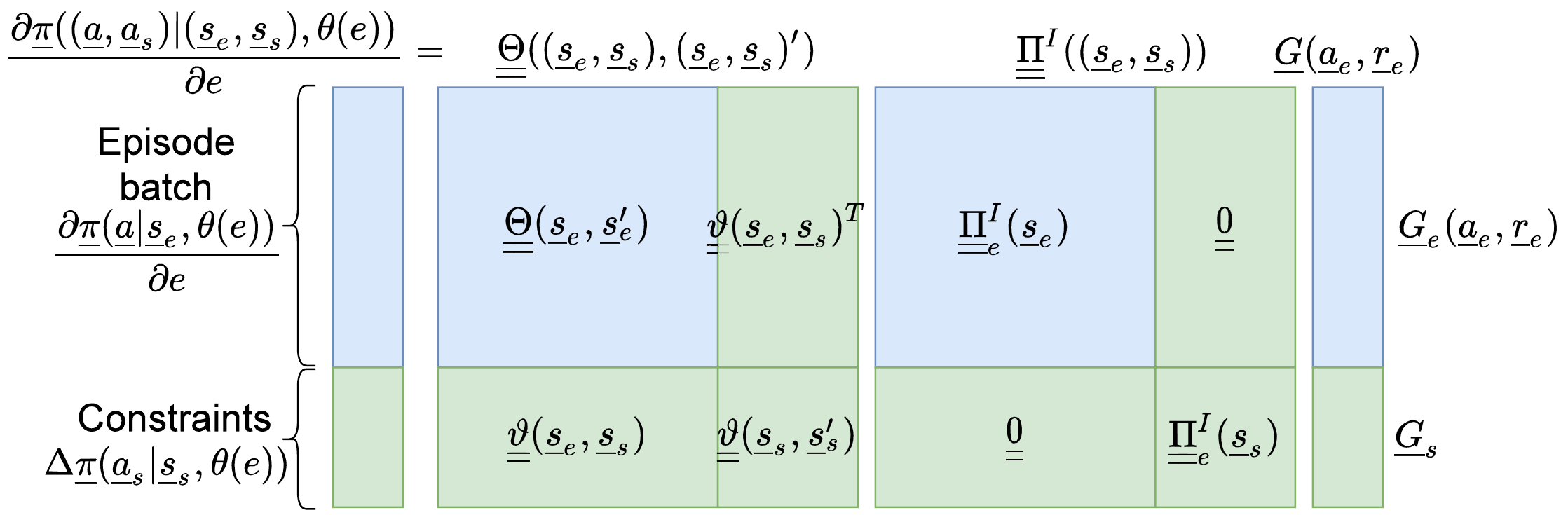}
\caption{Matrix compatibility of the constrained policy change evaluation (concatenated batch)}
\label{fig:aug_matrix}
\end{figure}

In order to obey the constraints, a safe data batch is constructed. We concatenate the safe states, actions, and computed returns with the episode batch as: $\{(\underline s_e, \underline s_s), (\underline a_e, \underline a_s), (\underline G_e(\underline a_e, \underline r_e), \underline G_s)\}$. Then, the agent's weights are updated with the appended batch with gradient ascent, Eq.~\eqref{eq:gradientascent}.

In the initial stages of learning, the difference between the reference policies and the actual ones will be large. Therefore high rewards are needed to eliminate this difference. This also means that the effect of the collected batch on the weights is minor compared to the safe state-action-return tuples. In addition, large returns might cause loss of stability during the learning. The returns computed from the linearized policy change might differ from the actual one, especially if large steps are taken (i.e.,~large returns are applied). On the other hand, when the policy obeys the constraints, the computed returns will be small and will only compensate for the effect of the actual batch. In a special case when the return is zero, the result is the unconstrained policy change at the specific state as in Section \ref{sec:anypoint}.
In addition, if the policy is smooth, the action probabilities near the constrained points will be similar. In continuous state space, this implies that defining grid-based (finite) constraints is sufficient. 

\begin{remark}
\textbf{Time complexity:} The critical operations are kernel evaluations and solving the linear equation system. The DGSEV algorithm used for solving the linear equation system has time complexity  $\mathcal{O}(n_S^3)$ \cite{haidar2018harnessing}. The time complexity of kernel evaluations is $\mathcal{O}((n_{B} + n_S)n_S)$. If the kernel is computed for every output channel at the batch states, time complexity increases to $\mathcal{O}((n_An_{B} + n_S)n_S)$.
\end{remark}

\subsection{Inequality constraints}
\label{sec:ineq_constr}
In the same way, inequality constraints can be prescribed too. Assume the there are some states of the environment where an action shall be taken with at least a constant probability: $\underline{\pi}_{ref,ineq}(\underline a_s\mid \underline s_s) \geq \underline c_{ineq}$. Then, similar to Eq.~\eqref{eq:constrained_system}, the inequality constraints can be written as
\begin{align}
\label{eq:ineq_constraints}
    &\Delta \underline \pi(\underline a_s \mid \underline s_s, \theta(e)) -  \underline{\underline{\vartheta}}(\underline s_e, \underline s_s)\underline{\underline{\Pi}}^I_e(\underline{s}_e, \theta(e)) \underline G_e(\underline a_e, \underline r_e) \\ \nonumber &  \leq  \underline{\underline{\vartheta}}(\underline s_s, \underline s_s')\underline{\underline{\Pi}}^I_e(\underline{s}_s, \theta(e))\underline G_s.
\end{align}
Solving this system of inequalities can be turned into a convex quadratic programming problem. Since the original goal of reinforcement learning is learning on the collected episode batch data, the influence of the constraints on the learning (i.e.,~the magnitude of $\underline G_s$) should be as small as possible. Therefore, the quadratic program is
\begin{equation}
    \min_{G_{si}} \sum_{i = 1}^{n_S}G_{si}^2
\end{equation}
\begin{center}
        subject to Eq.~\eqref{eq:ineq_constraints}.
\end{center}
Note that, the quadratic cost function is needed to similarly penalize positive and negative returns. Quadratic programming with interior point methods has polynomial time complexity ($\mathcal{O}(n_S^3)$, \cite{ye1989extension}) and has to be solved after every episode. Alternatively, it is possible to relax the constraints via introducing slack variables \cite{boyd2004convex}. This approach turns the optimization with hard constraints into soft ones, which can be solved even if there are conflicting constraints.
In practical applications, such numerical errors are more more of an issue than time complexity (i.e.~solving the optimization with thousands of constraints).

\subsection{Constrained regions - dynamically changing constraints}
\label{sec:regional_constraints}
The equality and inequality constraints we proposed in the previous subsections have some limitations, especially in high-dimensional state spaces. This subsection, explores whether the proposed methodology can be extended to constrained regions of the state space. Note that handling entire regions is computationally very intensive; therefore, we focus our efforts on dynamically changing (within a region) point-wise constraints. Prescribing point-wise constraints is not only computationally easier but also less stringent than constraining whole regions. The prerequisite for satisfying a constraint in a whole region require powerful function approximators, i.e., neural networks with many tuneable parameters. 

In order to handle regional constraints we modify Eq.~\eqref{eq:ineq_constraints} to admit $f_s(t)$:
\begin{align}
\label{eq:region_constraints}
    &\Delta \pi(a_s \mid f(t), \theta(e)) -  \underline{\underline{\vartheta}}(\underline s_e, f_s(t))\underline{\underline{\Pi}}^I_e(\underline{s}_e, \theta(e)) \underline G_e(\underline a_e, \underline r_e) \\ \nonumber &  \leq  \underline{\underline{\vartheta}}(f_s(t), f_s(t'))\underline{\underline{\Pi}}^I_e(f_s(t), \theta(e)) G_s(t).
\end{align}
Note that $G_s(t)$ is a scalar function in this case. I.e.,~we seek for a return function that forces the policy to obey the prescribed constraint in the region characterized by $f_s(t)$. This yields a non-convex optimization to find $G_s(t)$ that satisfies Eq.~\eqref{eq:region_constraints} $\forall t$. 

Under some assumptions, this optimization can be simplified. Assuming $\underline{\underline{\vartheta}}(f_s(t), f_s(t'))\underline{\underline{\Pi}}^I_e(f_s(t), \theta(e))$ is positive and invertible $\forall t$, one can write 
\begin{align}
\label{eq:region_constraints2}
    &\left(\Delta \pi(a_s \mid f_s(t), \theta(e)) -  \underline{\underline{\vartheta}}(\underline s_e, f_s(t))\underline{\underline{\Pi}}^I_e(\underline{s}_e, \theta(e)) \underline G_e(\underline a_e, \underline r_e)\right) \\ \nonumber &   \left(\underline{\underline{\vartheta}}(f_s(t), f_s(t'))\underline{\underline{\Pi}}^I_e(f_s(t), \theta(e)) \right)^{-1} \leq  G_s(t).
\end{align}
In other words, $G_s(t)$ is lower-bounded by a nonlinear function. 
Since we are looking for a single return value instead of a function we pick a single $G_{so} \geq G_s(t_o)$, $t_o \in [t_{min}, \; t_{max}]$. Here we present two approaches to do so. 
\subsubsection{Maximum return}
First, we select $t_o$ where $G_s(t_o)$ has its maximum within $[t_{min}, \; t_{max}]$.
For a continuous $t$, it can be done by computing the derivative of the left-hand side of Eq.~\eqref{eq:region_constraints2}:
\begin{align}
    &\frac{d}{dx}\left[ \left(\Delta \pi(a_s \mid f_s(t), \theta(e)) -  \underline{\underline{\vartheta}}(\underline s_e, f_s(t))\underline{\underline{\Pi}}^I_e(\underline{s}_e, \theta(e)) \underline G_e(\underline a_e, \underline r_e)\right)\right. \\ \nonumber &   \left.\left(\underline{\underline{\vartheta}}(f_s(t), f_s(t'))\underline{\underline{\Pi}}^I_e(f_s(t), \theta(e)) \right)^{-1} \right] = 0,
\end{align}
and evaluating the original function at the roots of the above equation and evaluating the original function at these points plus at $t_{min}$ and $t_{max}$. Then we can select $G_s(t_o)$ that is greater or equal to the local extremum. 

The numerical solution to the above is much easier. First, we discretize $t$ with $t_i \in [t_{min}, \; t_{max}]$ and evaluate Eq.~\eqref{eq:region_constraints2} at every $t_i$.
\begin{align}
\label{eq:constraint_selection_maxreturn}
    & t_o = \underset{t_i}{argmax} \left( \left(\Delta \pi( a_s \mid f_s(t), \theta(e)) -  \underline{\underline{\vartheta}}(\underline s_e, f_s(t_i))\underline{\underline{\Pi}}^I_e(\underline{s}_e, \theta(e)) \underline G_e(\underline a_e, \underline r_e)\right)\right. \nonumber \\  &   \left.\left(\underline{\underline{\vartheta}}(f_s(t_i), f_s(t_i))\underline{\underline{\Pi}}^I_e(f_s(t_i), \theta(e)) \right)^{-1}\right).
\end{align}

Then, we select $G_{so} \geq G_s(t_o)$. We use only a single $G_{so}$ corresponding to the environment state $f_s(t_o)$ during the episode-by-episode learning. The extremum $G_s(t_o)$ will be at different $t$ values. Therefore, if the function approximator is capable, eventually, the constraint will be satisfied on the whole region. Intuitively, it would be possible to use multiple returns per region, to speed up episode-wise learning at the cost of higher computational demand. 

The main drawback of this approach is its computational demand. Despite we use only one constraint per region, we have to evaluate Eq.~\eqref{eq:region_constraints2} for every $t_i$, which involves evaluating the kernel too, which has $\mathcal{O}(n_B + 1)$ time complexity ($n_S = 1$). Therefore, with $n_t$ discrete steps of $t$, we have $\mathcal{O}(n_B + 1)n_t$.

\subsubsection{Maximum policy deviation}
It is possible to take a simplified approach. Instead of solving Eq.~\eqref{eq:region_constraints2} to find the return that has the most effect in the region, we can look for the point, where the policy deviates the most from the reference (worst-case). 
\begin{equation}
    t_o = \underset{t}{argmax}\left( {\pi}_{ref,reg}( a_s\mid f_s(t)) - {\pi}(a_s\mid f_s(t), \theta(e))\right), 
\end{equation}
or in the discrete case
\begin{equation}
\label{eq:constraint_selection_maxdiff}
    t_o = \underset{t_i}{argmax} \left({\pi}_{ref,reg}( a_s\mid f_s(t_i)) - {\pi}( a_s\mid f_s(t_i), \theta(e))\right). 
\end{equation}
Then, we solve Eq.~\eqref{eq:ineq_constraints} for the state $f_s(t_o)$.

Since in these two methods we always select one constrained state per region, these states change episode-by-episode, we could label them dynamic constraint selection too. 
The dynamical regional constraining methods are further discussed in \ref{app:reg_constr} through numerical simulations. 

\begin{remark}
\textbf{Multiple constraints per region.}
The simplest but least efficient way to tackle regional constraints would be to use every discrete $t_i$ within the region resulting in $n_t$ constraints per region. For large state spaces and fine discretizations, solving the optimization might be numerically challenging. The above two approaches can be extended to multiple returns too. We could select multiple states per region that would affect the learning the most. Additionally, we could further discretize each region.
\end{remark}

\begin{remark}
\textbf{Other policy gradient methods.}
One extension of REINFORCE is policy gradient with baseline. There, a baseline (typically the value function) is subtracted from the returns to reduce variance. The policy is then updated with these modified returns using the policy gradient theorem \cite{sutton2018reinforcement}. Constraints can be adapted to the policy gradient with baseline too. Since the returns at the constrained states are shaped to satisfy specific action probabilities, baselines should not be subtracted from the safe returns. Therefore, in a constrained REINFORCE with baseline, batch returns are offset by the baseline while the safe returns are not. 
\end{remark}

%We have investigated other approaches, such as constructing a convex hull in the state space, but that would lead to sparse constraints around the region, not prescribing any constraint in the middle of the center. On the other hand, using the center of gravity or other interior points of the constrained region would again lead to static constrained states that have no guarantee of making the policy to obey the constraint in the whole constrained region. Additionally, prescribing too stringent constraints exploiting the smoothness of the policy could result in conservative policies, i.e.~the constraint would be satisfied outside of the region too, where it could be suboptimal. 

We summarize the NTK-based constrained REINFORCE algorithm in Algorithm \ref{alg:REINFORCE_constr}.
\begin{algorithm}[htb!]
\caption{The NTK-based constrained REINFORCE algorithm}
\begin{algorithmic}[1]
%\SetAlgoNoEnd
%\SetAlgoNoLine
\State Initialize $e = 1$.
\State Define equality constraints $\underline{\pi}_{ref,eq}(\underline a_s\mid \underline s_s) = \underline c_{eq}$.
\State Define inequality constraints $\underline{\pi}_{ref,ineq}(\underline a_s\mid \underline s_s) \geq \underline c_{ineq}$.
\State Define regional constraints 
${\underline \pi}_{ref,reg}(\underline a_s\mid \underline f_s(t)) \geq c_{reg}$.
\State Initialize the policy network with random $\theta$ weights. 
\While{not converged} 
\For{every regional constraint $(f_{s1}(t),\; f_{s2}(t),\;...)$}
\State Find the dynamically constrained state $f_s(t_o)$ with
\State "Maximum return" (Eq.~\eqref{eq:constraint_selection_maxreturn}) 
\State \textbf{or}
\State "Maximum policy deviation" (Eq.~\eqref{eq:constraint_selection_maxdiff}) 
\State Append $f_s(t_o)$ to the set of constrained states.
\EndFor
\State Generate a Monte-Carlo trajectory $\{\underline{s}_e(k), \underline{a}_e(k), \underline{r}_e(k)\}$, 

$k = 1,2, ..., n_{B}$ with the current policy $\pi(\underline{a}_e(k) \mid \underline{s}_e(k), \theta(e))$. 
\For{the whole MC trajectory ($k=1,2,...,n_{B}$)}
\State Compute the returns $G_e(k)$ with Eq.~\eqref{eq:return}.
\EndFor
\State Construct $\underline{\underline{\vartheta}}(\underline s_e, \underline s_s)$, $\underline{\underline{\vartheta}}(\underline s_s, \underline s_s')$, $ \underline{\underline{\Pi}}^I_e(\underline{s}_e, \theta(e))$, $ \underline{\underline{\Pi}}^I_e(\underline{s}_s, \theta(e))$.
\State Compute $\underline G_s$ with Eq.~\eqref{eq:constrained_system} \textbf{and}  Eq.~\eqref{eq:ineq_constraints}.
\State Concatenate the MC batch and the constraints:

$\{(\underline s_e, \underline s_s), (\underline a_e, \underline a_s), (\underline G_e(\underline a_e, \underline r_e), \underline G_s)\}$.
\For{the augmented MC trajectory ($k=1,2,...,n_{B}+n_S$)}
\State Update policy parameters with gradient ascent (Eq.~\eqref{eq:gradientascent}). \EndFor
\State Increment $e$.
\EndWhile
\end{algorithmic}
\label{alg:REINFORCE_constr}
\end{algorithm}

\section{Experimental studies}
\label{sec:benchmarks}
We investigate the proposed constrainted learning algorithm in two OpenAI gym environments with increasing complexity: Cartpole \cite{barto1983neuronlike} and Lunar lander (Figure \ref{fig:gym}), \cite{brockman2016openai}. The simplicity of the Cartpole environment enables us to perform in depth analysis of the proposed algorithm. The lunar lander demonstrates its efficiency in higher-dimensional environment. 
\begin{figure}[htb!]
\centering
  \centering
  \includegraphics[width=1\linewidth]{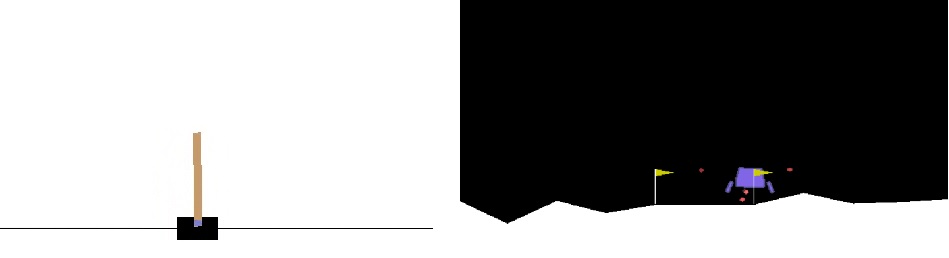}
  \caption{Visualization of the environments. Left: Cartpole, right: Lunar lander.}
\label{fig:gym}
\end{figure}

The learning agent is a 2 layer deep fully-connected ReLU network with softmax output nonlinearity and appropriate input-output sizes (i.e.,~4 inputs and 2 outputs for Cartpole, and  8 inputs and 6 outputs for Lunar lander). The hidden layer width is $5000$ neurons with bias terms. That is to comply with the assumptions in \cite{jacot2018neural}, i.e.,~a shallow and wide neural network. On the other hand, the NTK can be computed for more complex NN structures too e.g.,~\cite{yang2019fine}. Note that the primary purpose is not finding the best function approximator, merely demonstrating the efficiency of the constrained learning. To achieve lazy learning, the learning rate is set to $\alpha = 0.0001$. The small learning rate ensures that the approximation of the policy change remains accurate, see \ref{app:accuracy}. 

\subsection{Cartpole}
\label{sec:cartpole}
The cart pole problem (also known as the inverted pendulum) is a common benchmark in control theory as it can be easily modeled as a linear time invariant system \cite{skogestad2007multivariable}. The goal is to balance a pole to remain upright by horizontally moving the cart. The agent in this environment can take two actions: accelerating the cart left ($a^0$) or right ($a^1$). The cart pole has four states: the position of the cart ($x$), its velocity ($\dot x$), the pole angle ($\varphi$), and the pole angular velocity ($\dot \varphi$). In the reinforcement learning setting, the agent's goal is to balance the pole as long as possible. Reward is given for every discrete step if the pole is in vertical direction, and the episode ends if the pole falls or successfully balancing for 200 steps. The pass criteria for this gym environment is reaching an average reward of 195 for 100 episodes.

Constraints are implemented through intuition. If the pole is tilted too much  right, the cart must move right to balance it, and vice versa. Selecting too many states to constrain slow down the computation significantly while defining conflicting constraints can make Eq.~\eqref{eq:ineq_constraints} unsolvable. Inequality constraints are imposed on the pole angle and its angular velocity at discrete cart positions resulting in 18  constrained states, see Table \ref{tab:constraints} in \ref{app:constraints}. 

With the proper selection of constraints, it is possible to train the agent in 5 steps. Figure \ref{fig:CartPoleTrained} shows the learning of the constrained agent. The agent learns the problem within a few episodes but it can be attributed to the simplicity of the problem and the choice of constraints. For this environment the constraints are chosen based on what the agent should do intuitively. Thus, once the constraints are fulfilled the agent will be able to balance the pole. The agent can only learn new policies at states which are not constrained, thus the final policy will be sub-optimal (if the constraints are sub-optimal). This is a trade-off between more sample efficient and safe learning and optimality. The episode loss is presented in Figure \ref{fig:CartPoleLoss}. It is initially large because the constraints are not yet fulfilled (elements in $\underline G_s$ are large). Figure \ref{fig:CartPolePolicy} depicts the section of the learned policy with the constrained states. The resulting policy is very smooth, and the constraints are respected.
\begin{figure}[htb!]
\centering
  \centering
  \includegraphics[width=0.8\linewidth]{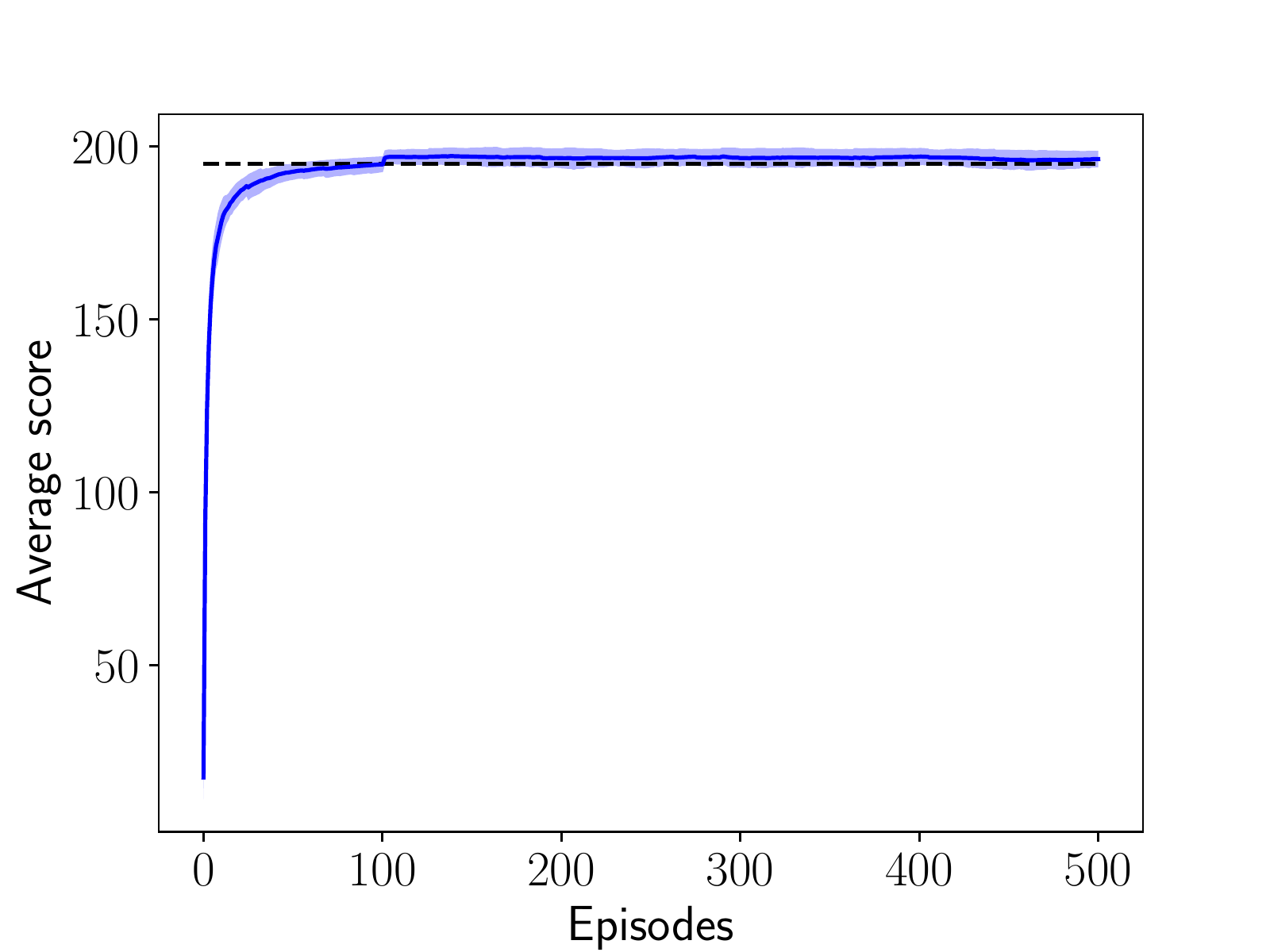}
  \caption{Solving Cartpole with constrained-PG for 10 different random seeds. Pass criteria is an average score above 195.}
\label{fig:CartPoleTrained}
\end{figure}
\begin{figure}[htb!]
\centering
  \centering
  \includegraphics[width=0.8\linewidth]{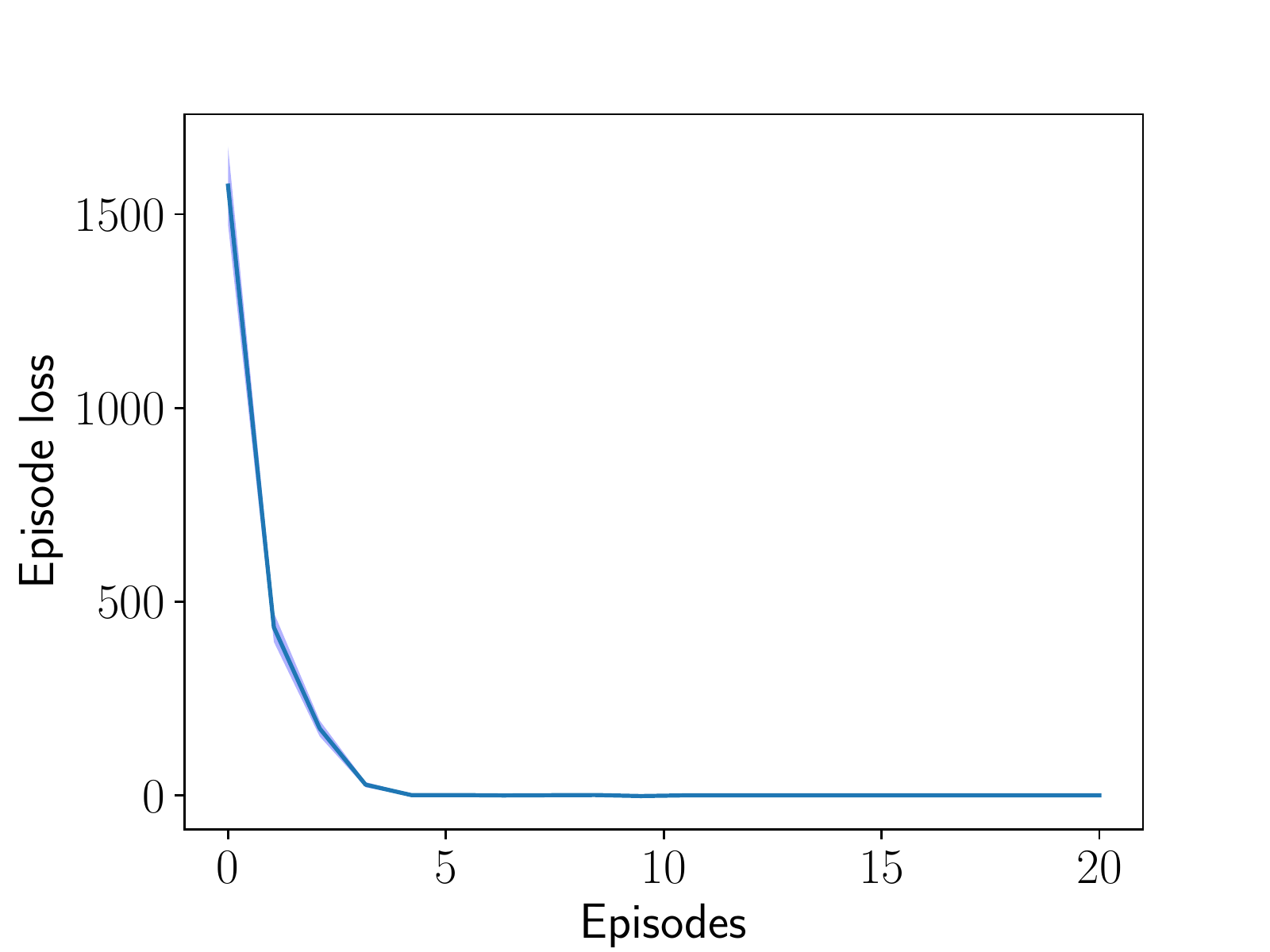}
  \caption{Episode loss (Cartpole)}
\label{fig:CartPoleLoss}
\end{figure}
\begin{figure}[htb!]
\centering
  \centering
  \includegraphics[width=0.8\linewidth]{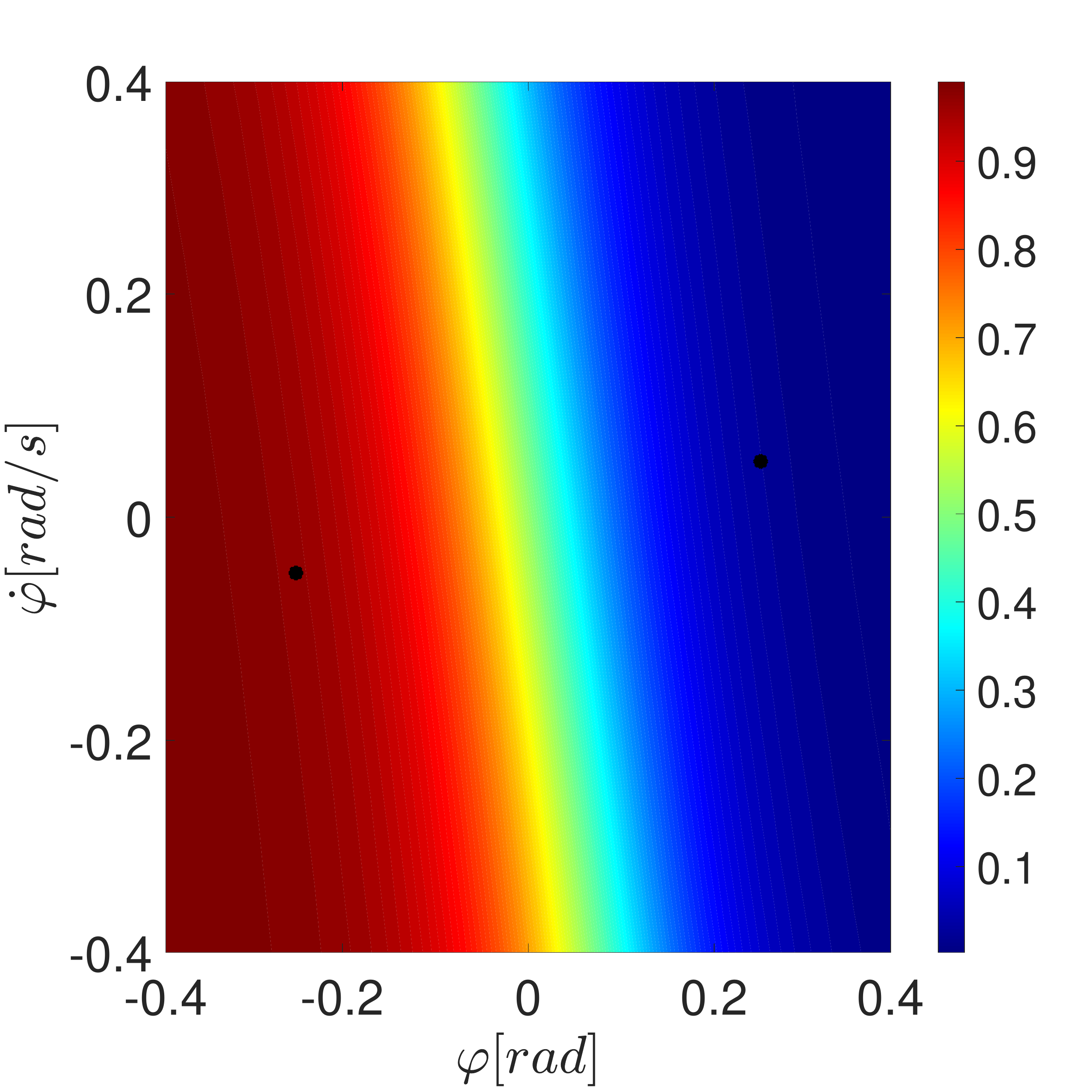}
  \caption{Probability of taking $a^0$ (push the cart left) in the Cartpole environment at $x=0$, $\dot x = 0$. Dots represents the constrained states $s_{s5}$, $s_{s14}$ (Table \ref{tab:constraints})}
\label{fig:CartPolePolicy}
\end{figure}

\FloatBarrier

Next, instead of point-wise constraints we introduce constrained regions inspired by Figure \ref{fig:CartPolePolicy}. We define two main regions bounded by 
\begin{align}
    f_1(t): & \hspace{0.3cm} \dot \varphi \leq -8 \varphi -2, \\ f_2(t): & \hspace{0.3cm} \dot \varphi \geq -8 \varphi +1.2.
\end{align}
Each region is further discretized into three sub-regions (Table \ref{tab:constraints3}). Then, each sub-region is discretized into 50 states. We followed the "Maximum policy deviation" constraint selection strategy for every region, yielding 6 dynamically constrained states.
Simulation results are on par with the statically constrained simulations, see Figure \ref{fig:CartPoleTrained_reg}. The agent with the regional constraints learn slightly slower. This is possibly due to the more stringent constraints: there are less constrained states but their location is changing from episode to episode. This is reflected in the evolution of the loss too (Figure \ref{fig:CartPoleLoss_reg}). It takes more episodes for the loss to disappear and it starts from a higher initial value due to the different selection of constraints. Finally, Figure \ref{fig:CartPolePolicy_reg} shows the learned policy, which is almost identical to the statically constrained one. 
\begin{figure}[htb!]
\centering
  \centering
  \includegraphics[width=0.8\linewidth]{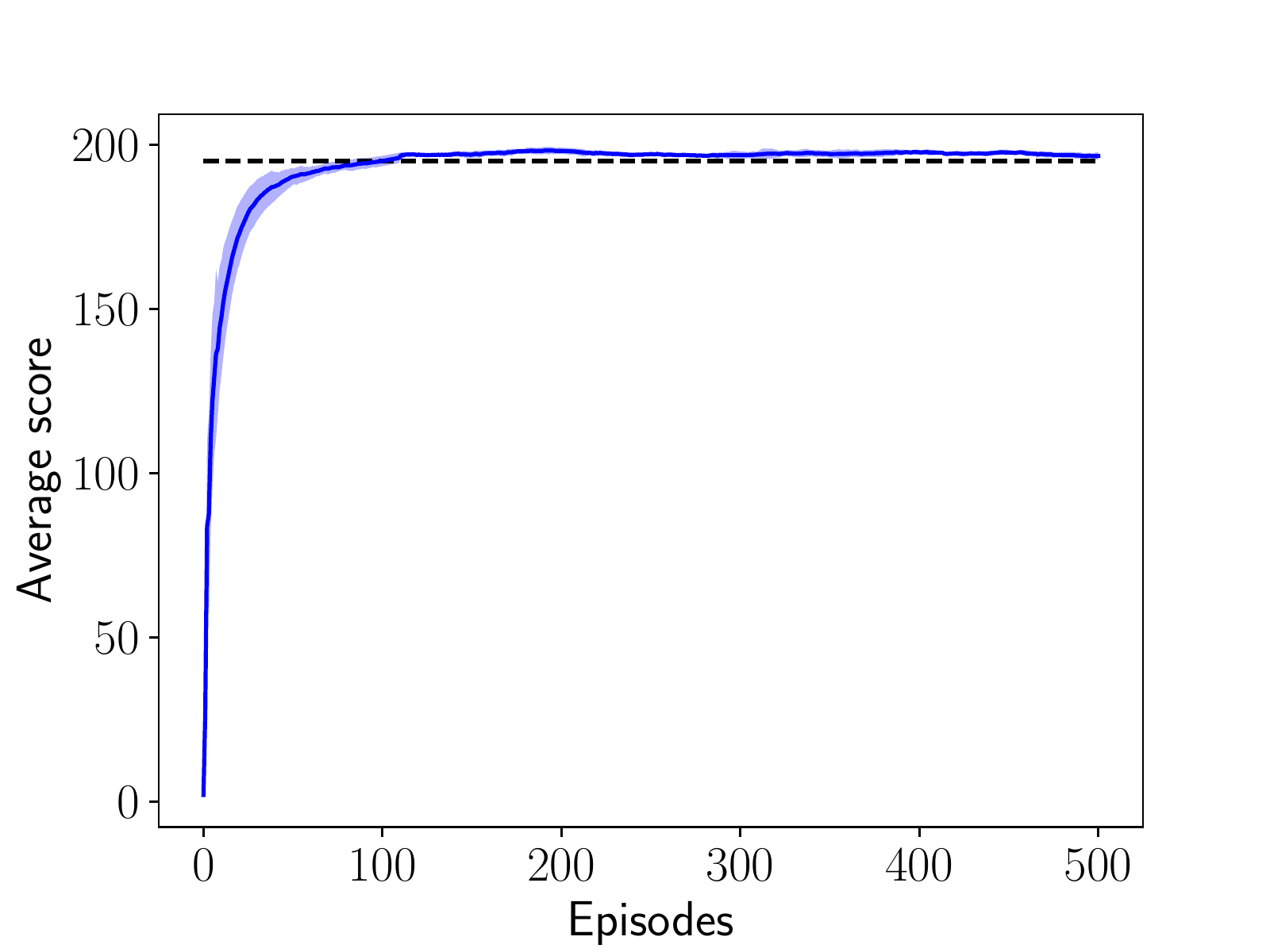}
  \caption{Solving Cartpole with regionally constrained-PG for 10 different random seeds. }
\label{fig:CartPoleTrained_reg}
\end{figure}
\begin{figure}[htb!]
\centering
  \centering
  \includegraphics[width=0.8\linewidth]{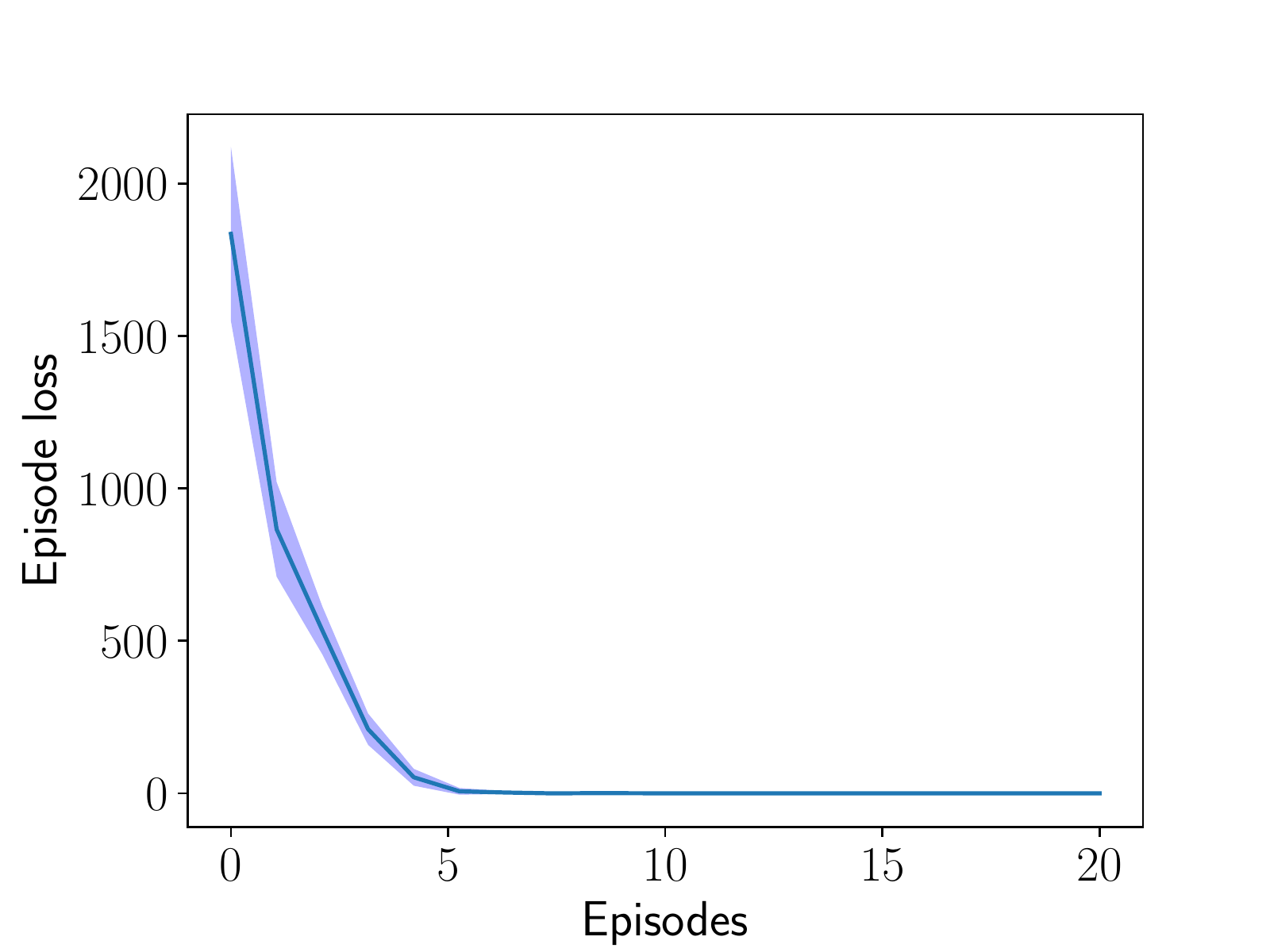}
  \caption{Episode loss (Cartpole, regional constraints)}
\label{fig:CartPoleLoss_reg}
\end{figure}
\begin{figure}[htb!]
\centering
  \centering
  \includegraphics[width=0.8\linewidth]{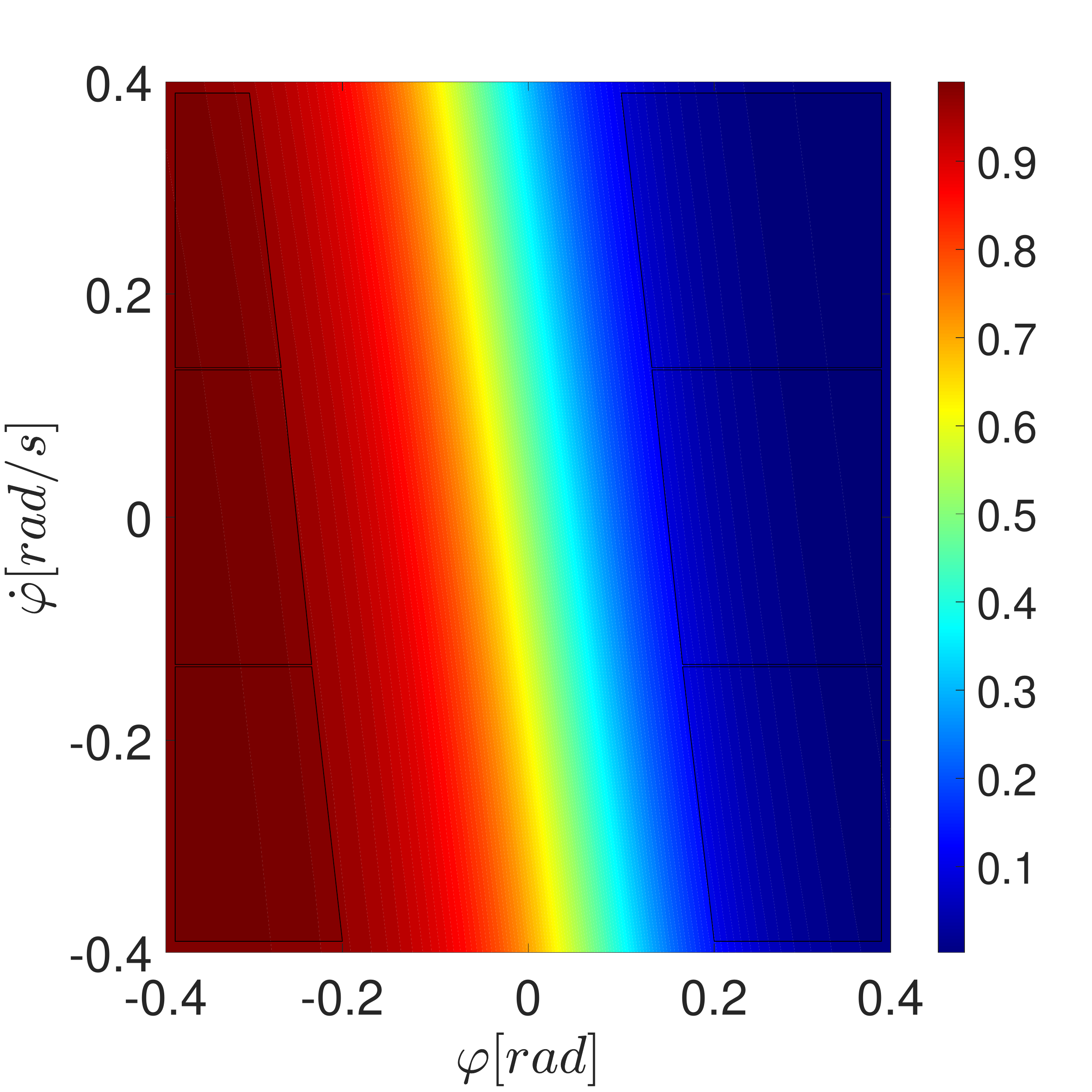}
  \caption{Probability of taking $a^0$ (push the cart left) in the Cartpole environment at $x=0$, $\dot x = 0$. Gray regions represent the constraints described in  Table \ref{tab:constraints3}}
\label{fig:CartPolePolicy_reg}
\end{figure}

These results would place the algorithm in the top 10 of the OpenAI gym leaderboard for this environment, competing with deterministic policies and closed-loop controllers, and being much faster than traditional RL methods (e.g.,~\cite{kumar2020balancing, gymLeaderboard}). Figure \ref{fig:Cartpole_benchmark} shows how the agent learns with the commonly used (unconstrained) double deep Q learning \cite{van2016deep}. The constraining makes learning much faster as it eliminates the need to explore states that are known to be unsafe.
\begin{figure}[htb!]
\centering
  \centering
  \includegraphics[width=0.8\linewidth]{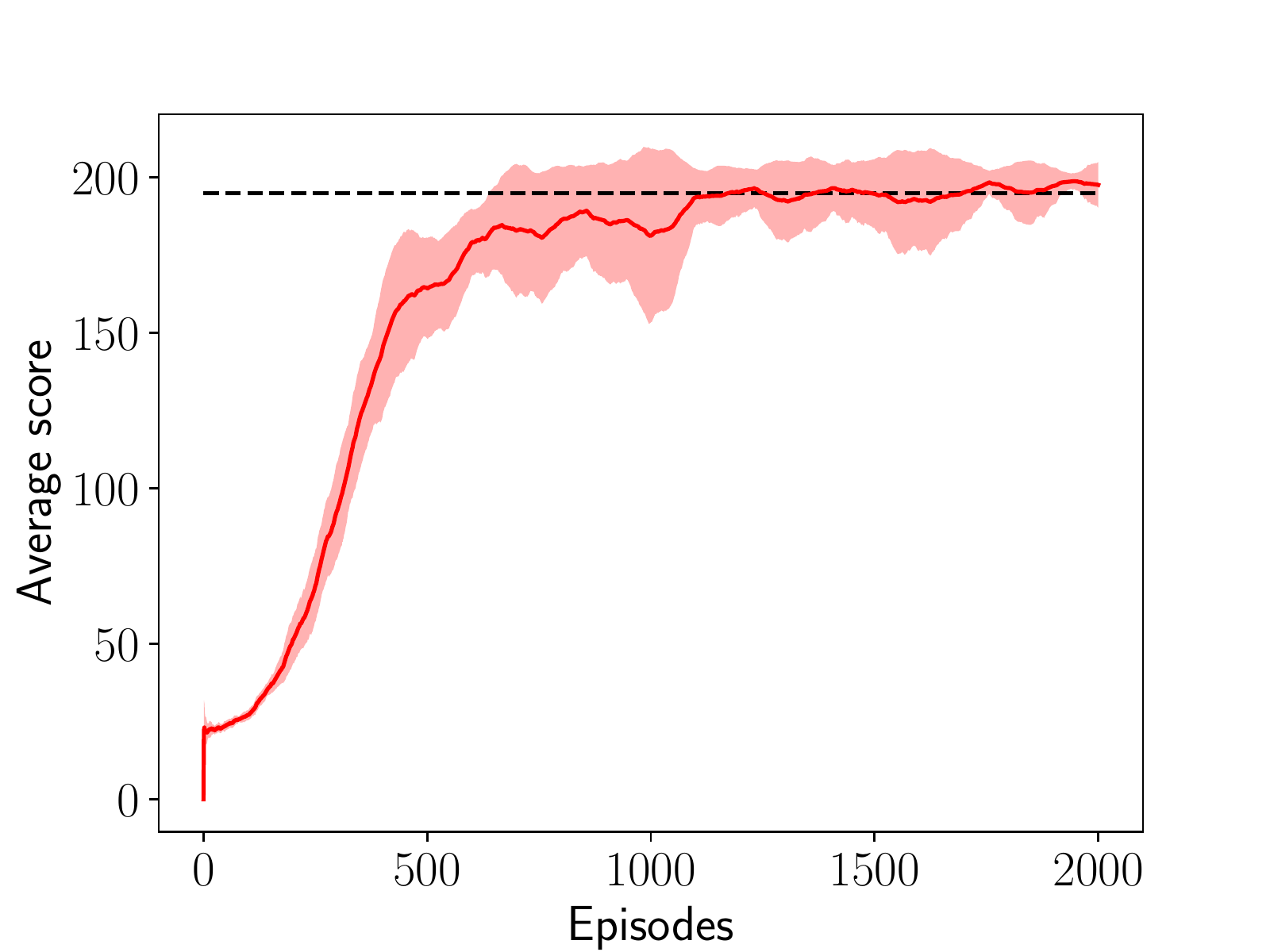}
  \caption{Solving Cartpole with double deep Q learning for 10 different random seeds.}
\label{fig:Cartpole_benchmark}
\end{figure}

\subsection{Lunar lander}
\label{sec:lunarlander}
The goal in this 2D environment is to land a rocket on a landing pad without crashing. The agent in this environment has eight states: its horizontal and vertical coordinates ($x$, $y$) and velocities ($\dot x$, $\dot y$), its angle $\varphi$, its angular velocity $\dot \varphi$ and the logical states whether the left and right legs are in contact with the ground ($l_{left}$ and $l_{right}$). The agent can choose from four actions: 0: do nothing, 1: fire the left thruster, 2: fire the main engine, and 3: fire the right thruster. The episode finishes if the lander crashes or
comes to rest. Reward is given for successfully landing close to the landing pad. Crashing the rocket results in a penalty. Firing the engines (burning fuel) also results in small penalties. In OpenAI gym the pass criteria for solving this environment is an average reward of 200 for 100 episodes.

During unconstrained learning, the two most common reasons observed for episode failures were the lander crashing too fast into the ground and tilting over mid-flight. To this end, constraints were imposed on the vertical velocity and the angle of the lander. Based on these empirical observations we propose inequality constraints. 

Constraints are imposed to keep the lander on an ideal trajectory: as the lander comes closer to the ground, it should decelerate by firing the main engine (simulating hover slam). If the rocket has a too large horizontal velocity or is tilted, the side engines should be used. The proposed constraints are summarized in Table \ref{tab:constraints2} (\ref{app:constraints}). 
In order to ease computational load, only every $10^{th}$ step of the Monte-Carlo trajectory was logged. Too long trajectories slow down computation as the NTK has to be evaluated at more points, which has polynomial time complexity. 

With the above setup, the agent was able to land after a few episodes successfully. However, after 500 episodes of training, the 100 runs average reward was just below 200, see Figure \ref{fig:LunarLanderTrained}. The loss was initially high but since the constraints enforced a "good" policy, it declined quickly. 
\begin{figure}[htb!]
\centering
  \centering
  \includegraphics[width=0.8\linewidth]{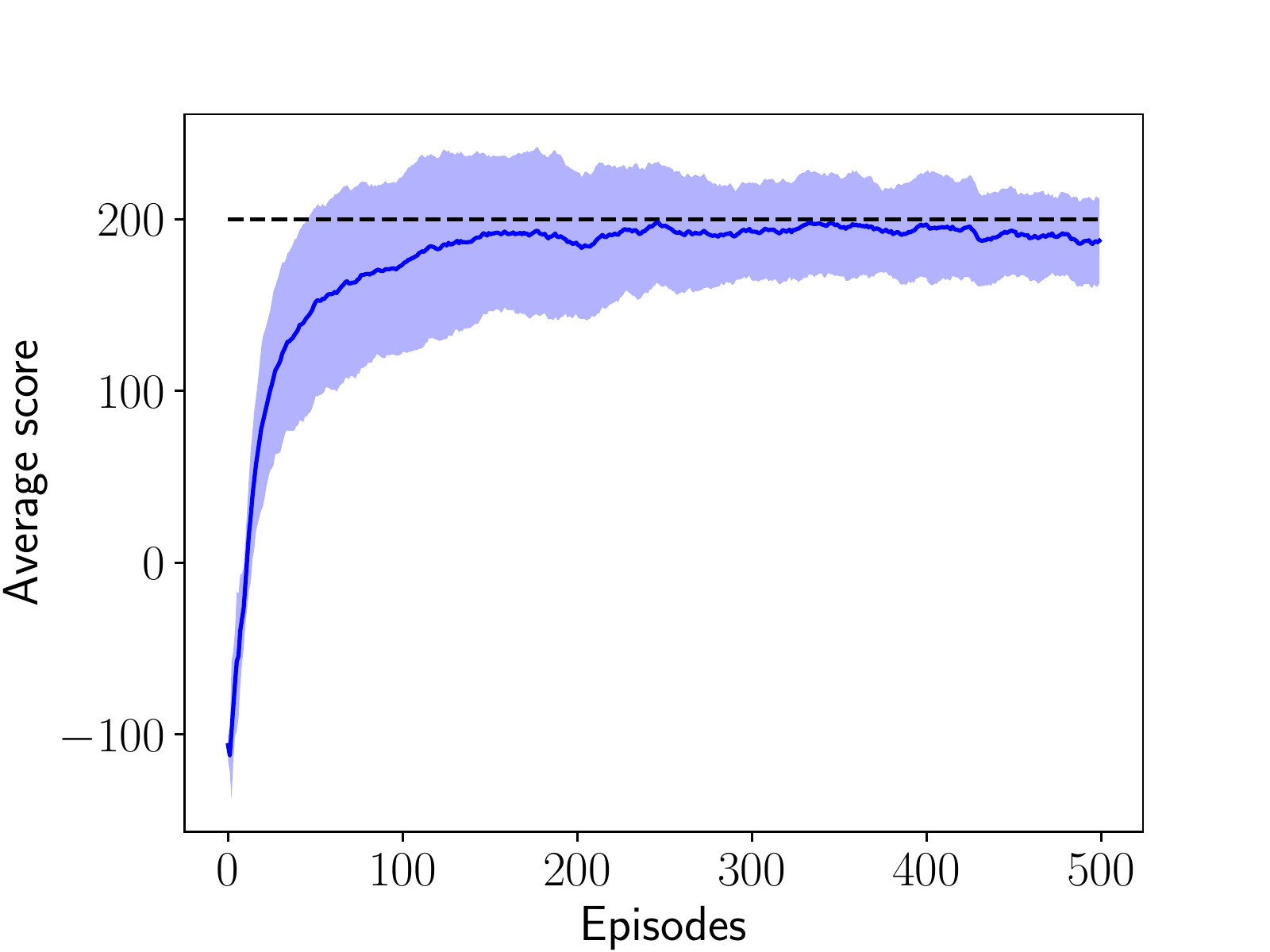}
  \caption{Solving Lunar lander with constrained-PG for 10 different random seeds.  Pass criteria is an average score above 200.}
\label{fig:LunarLanderTrained}
\end{figure}
\begin{figure}[htb!]
\centering
  \centering
  \includegraphics[width=0.8\linewidth]{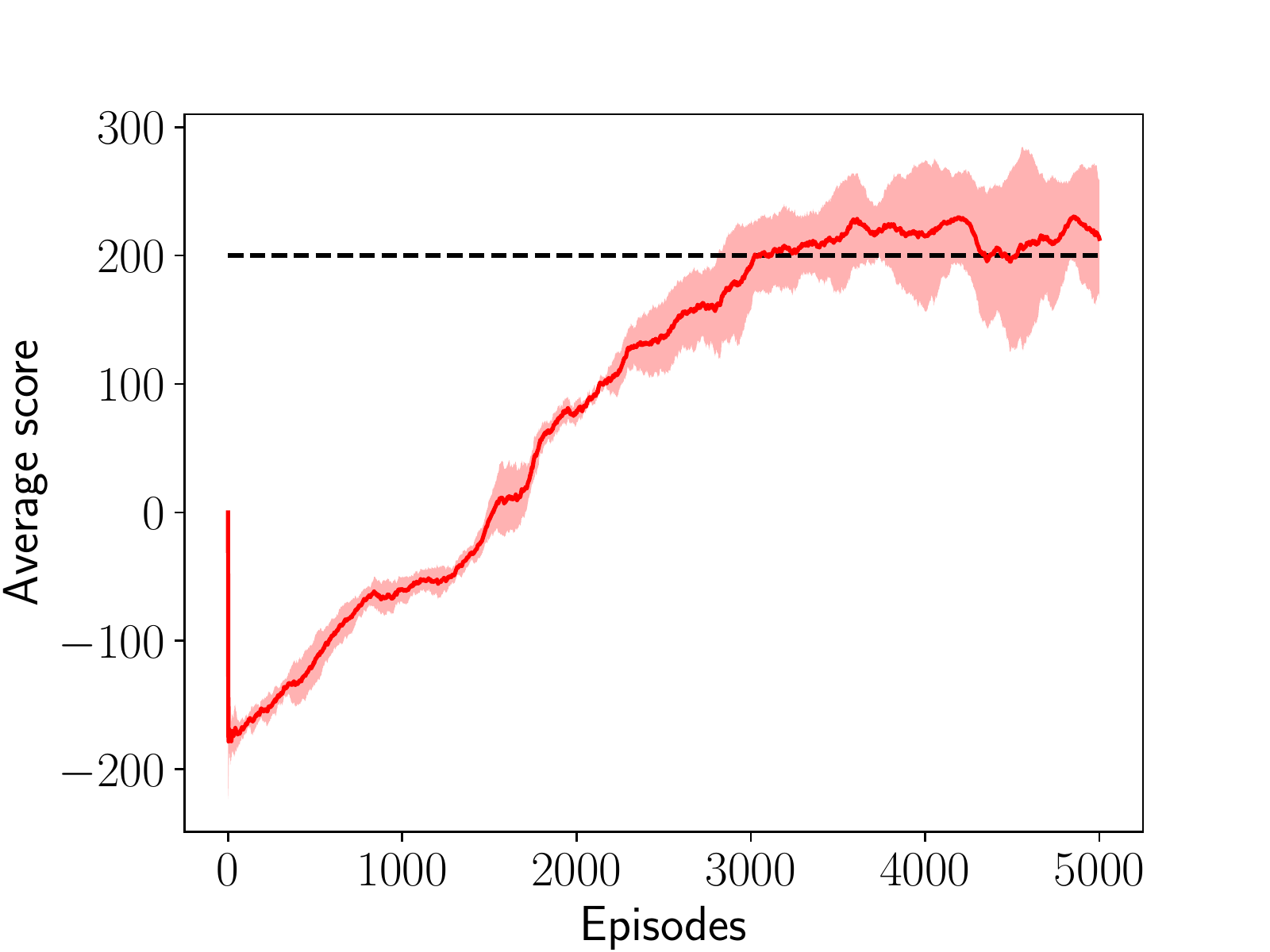}
  \caption{Solving Lunar lander with double deep Q learning for 10 different random seeds}
\label{fig:LunarLander_benchmark}
\end{figure}
\begin{figure}[htb!]
\centering
  \centering
  \includegraphics[width=0.8\linewidth]{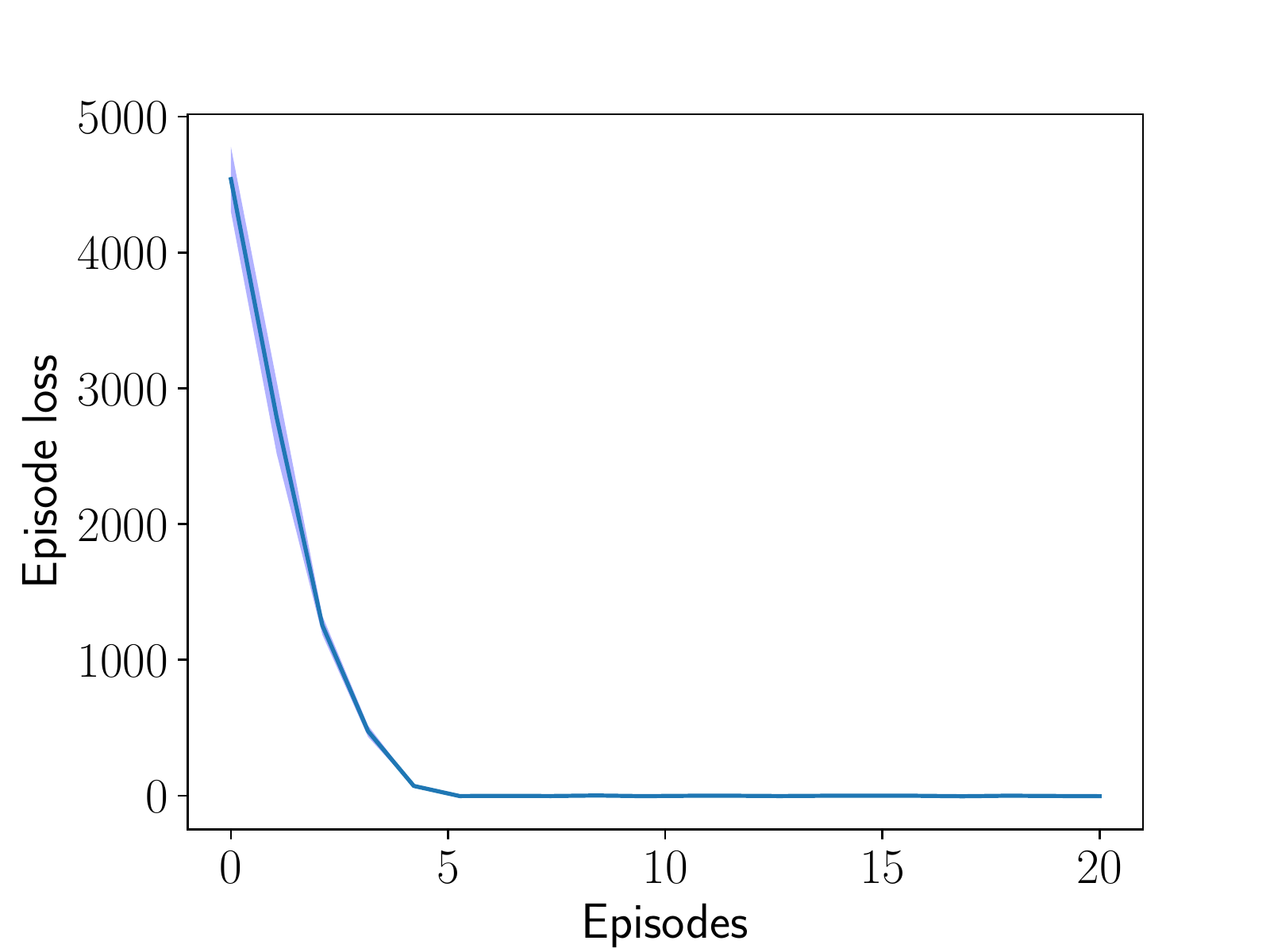}
  \caption{Episode loss (Lunar lander)}
\label{fig:LunarLanderLoss}
\end{figure}

The results in this environment shed light on some important features of the proposed algorithm. The agent can within a few episodes find a good policy if the constraints are set up right. On the other hand, poor selection of constraints can harm the performance on the long run. The algorithm can achieve similar performance in significantly less steps compared to other benchmarks e.g.,~\cite{van2016deep, gadgil2020solving}. 
Figure \ref{fig:LunarLander_benchmark} shows how the agent learns with double deep Q learning \cite{van2016deep}. The unconstrained method learns one order of magnitude slower, however it can reach slightly higher average scores by the end of the training. Therefore, there is a trade-off between speeding up learning via constraining and reaching the highest possible average score. On the other hand, enforcing constraints during learning reduces oscillations and the variance from different random seeds, that is a common issue for typical benchmarks. Therefore, it is easier to know when to stop learning. 
With more careful selection of constraint states, the final average score of the agent could be improved, i.e.,~the constraints would not hamper reaching the optimal policy. This highlights one more drawback of the constrained approach: if the dimension of the environment space is large, the number of required constraints in a grid-based fashion increase significantly (i.e.,~the curse of dimensionality applies). Therefore, a lot more manual tuning effort is required to achieve safe and fast learning.

\section{Conclusions}
\label{sec:concl}
We proposed a solution to augment the REINFORCE algorithm with equality, inequality, and dynamically changing regional constraints. To this end, the policy evolution was computed with the help of the neural tangent kernel. Then, arbitrary states were selected with desired action probabilities. Next, for these arbitrary state-action pairs returns were computed that approximately satisfy the prescribed constraints under gradient ascent.
The efficiency of the constrained learning was demonstrated with a shallow and wide ReLU network in the Cartpole and Lunar Lander OpenAI gym environments. Results suggest that constraints are satisfied after 2-3 episodes. If they are set up correctly, learning becomes extremely fast (episode-wise) while satisfying safety constraints, thus ensuring some transparency of the policy too. If the constraints are satisfied the returns for safety become small, only slightly influencing learning from the Monte-Carlo trajectory episode batch. 
On the other hand, selecting suitable constraints requires expert knowledge about the environment. Therefore, the proposed algorithm is best suited for controlled physical systems where saturations and unsafe states can be pinpointed, and countermeasures can be explicitly defined. On the other hand, the learning algorithm may suffer from the curse of dimensionality: in high-dimensional state-spaces setting up constraints manually is tedious. In addition, solving the optimization for several constraints has polynomial time complexity. 

As a future line of research, other variants of policy gradient methods will be analyzed. We hypothesize that more complex policy-based approaches can be augmented with constraints using the NTK too. Furthermore, we intend to improve our take on constrained regions. We intend to solve the constrained optimization on a different domain. I.e., transform regional constraints to a domain where they simplify to linear constraints. Note that the proposed "Maximum policy deviation" approach does exactly this in a simplified manner. The transformation is simply the $max$ function. 

\section*{Declarations and statements}
\subsection*{Funding}
This work has been supported and funded by the project RITE (funded by CHAIR, Chalmers University of Technology).
\subsection*{Statement of interest}
The authors have no relevant financial or non-financial interests to disclose.
\subsection*{Ethics approval}
Not applicable.
\subsection*{Consent} 
All authors consent to participating in the paper and publishing their individual data or image. 
\subsection*{Author contribution}
All authors contributed to the study conception and design. Material preparation, implementation and analysis were performed by Balázs Varga. The first draft of the manuscript was written by Balázs Varga and all authors commented on previous versions of the manuscript. All authors read and approved the final manuscript.
\subsection*{Data availability}
Not applicable.
\subsection*{Code availability}
Source codes will be made publicly available.

\pagebreak

\begin{appendices}

\section{Accuracy of the prediction}
\label{app:accuracy}
Due to the linearization in the NTK (gradient calculation), by neglecting the non-differentiable property of the ReLU activation at the origin, as well as by numerical precision errors the NTK-based policy change prediction might be biased. In this appendix this bias is analyzed by comparing the predicted policy update (for the whole batch, Eq.~\eqref{eq:batch_change}) $\frac{\partial\underline \pi^B(\underline a \mid \underline s_e, \theta(e))}{\partial e}$ with the actual one (at the batch average state) $\bar q_e(\underline s_e, \underline a_e, \theta(e), \theta(e+1)) = \frac{1}{n_{B}}\sum_{k=1}^{n_{B}}(\underline \pi(a_e(k) \mid s_e(k),\theta(e+1)) -  \underline \pi(a_e(k) \mid s_e(k),\theta(e))$, assuming data batch $\{ \underline s_e, \underline a_e, \underline r_e\}$.
Assuming no constraints, the agent from Section \ref{sec:cartpole} tries to learn to balance the pole in the cart pole environment for 100 episodes. 
The episode-by-episode relative errors during learning are computed with Eq.~\eqref{eq:rel_error}. Results are shown in Figure \ref{fig:NTK_accuracy} for three different learning rates.
\begin{equation}
\label{eq:rel_error}
    \varepsilon = \frac{\frac{\partial\underline \pi^B(\underline a \mid \underline s_e, \theta(e))}{\partial e} - \bar q_e(\underline s_e, \underline a_e, \theta(e), \theta(e+1)))}{(\bar q_e(\underline s_e, \underline a_e, \theta(e), \theta(e+1))}\cdot 100.
\end{equation}
Results suggest that the prediction becomes more accurate as the learning rate decreases. The learning rate used in the simulations (Section \ref{sec:benchmarks}, $\alpha = 0.001$) yields approximately $0.05\%$ prediction error.  
\begin{figure}[htb!]
\centering
\begin{subfigure}{.55\textwidth}
  \centering
  \includegraphics[trim={0 0 0 1cm},clip,width=1\linewidth]{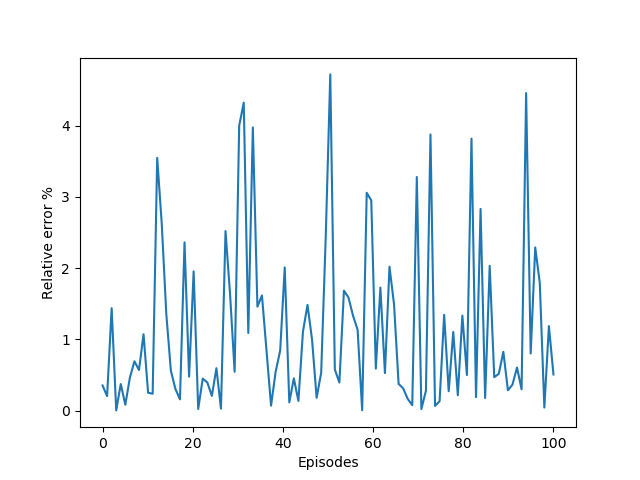}
  \caption{$\alpha = 0.001$}
\end{subfigure}%

\begin{subfigure}{.55\textwidth}
  \centering
  \includegraphics[trim={0 0 0 1cm},clip,width=1\linewidth]{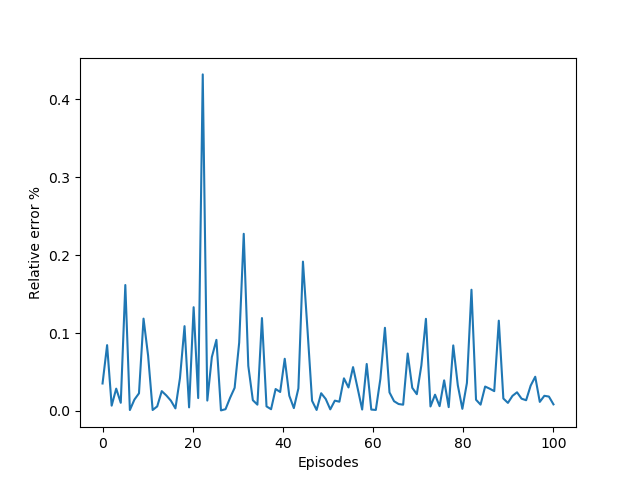}
  \caption{$\alpha = 0.0001$}
\end{subfigure}

\begin{subfigure}{.55\textwidth}
  \centering
  \includegraphics[trim={0 0 0 1cm},clip,width=1\linewidth]{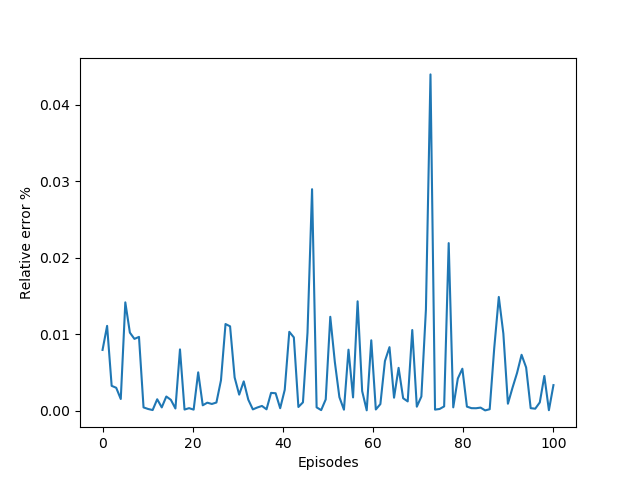}
  \caption{$\alpha = 0.00001$}
\end{subfigure}
\caption{Relative estimation errors between the computed NTK based policy change and the actual, SGD based policy change (Eq.~\eqref{eq:rel_error}). Results are shown for the first output channel of the policy network.}
\label{fig:NTK_accuracy}
\end{figure}

\FloatBarrier

\section{Simulation-based analysis of the regional constraints}
\label{app:reg_constr}
Learning with the regional constraints proposed in Section \ref{sec:regional_constraints} is further investigated using numerical simulations. To this end, the Cartpole environment is used with the agent introduced in Section \ref{sec:benchmarks}. For visualization purposes, we only consider the last two states of the environment: the pole angle $\varphi$, and its angular velocity $\dot \varphi$ at the $x=0$, $\dot x = 0$ slice of the state space. 
First, we introduce four circular 2D disks as regional constraints, see Table \ref{tab:regionalconstraints}.
\begin{table}[h]
\centering
\begin{tabular}{l|cc|c} \hline
          & Center    & Radius (on $\varphi$-$\dot \varphi$) & ${\pi}_{ref,reg}(a^0\mid f_{s}(t))$ \\ \hline
$f_{s1}(t)\!$  & $[0,\; 0,\; -0.2 \; -0.2]$    & $0.05$  & $\geq 0.95$  \\
$f_{s2}(t)\!$   & $[0,\; 0,\; -0.2 \; 0.2]$    & $0.05$  & $\leq 0.05$  \\ 
$f_{s3}(t)\! $  & $[0,\; 0,\; 0.2 \; -0.2]$    & $0.05$  & $\leq 0.05$  \\
$f_{s4}(t)\!$    & $[0,\; 0,\; 0.2 \; 0.2]$    & $0.05$  & $\geq 0.95$ \\
\hline                                        
\end{tabular}
\caption{Constrained regions}
\label{tab:regionalconstraints}
\end{table}
Next, we discretize each disk into 30 points ($t_i$). Then, the constrained learning starts according to the "Maximum return" or the "Maximum policy deviation strategy". In every episode, for every discrete $t_o$ the return is selected with Eq.~\eqref{eq:constraint_selection_maxreturn} or Eq.~\eqref{eq:constraint_selection_maxdiff}, and the environment states corresponding to $t_o$ are used to solve Eq.~\ref{eq:ineq_constraints}. Selection of $t_o$ with the two strategies is depicted in Figure \ref{fig:Regional_constraints_X}. For both strategies, in the $0^{th}$ episode, when the policy is initialized randomly (i.e.,~approximately $50\%$ to take action $a^0$ everywhere), the computed return or policy deviation is almost the same in the whole region. Then, episode-by-episode, the gain needed or the policy deviation decreases. The location of the state to be constrained is always around the same value. This is due to the presence of the other three regional constraints. For both cases by the $9^{th}$ episode, the regional constraint is fulfilled (0 computed return, or 0 policy deviation). The sinusoidal nature of $f_{s1}(t)$ stems from the discretization of the disk. 
\begin{figure}[htb!]
\centering
\begin{subfigure}{.48\textwidth}
  \centering
  \includegraphics[trim={0 0 0 1cm},clip,width=1\linewidth]{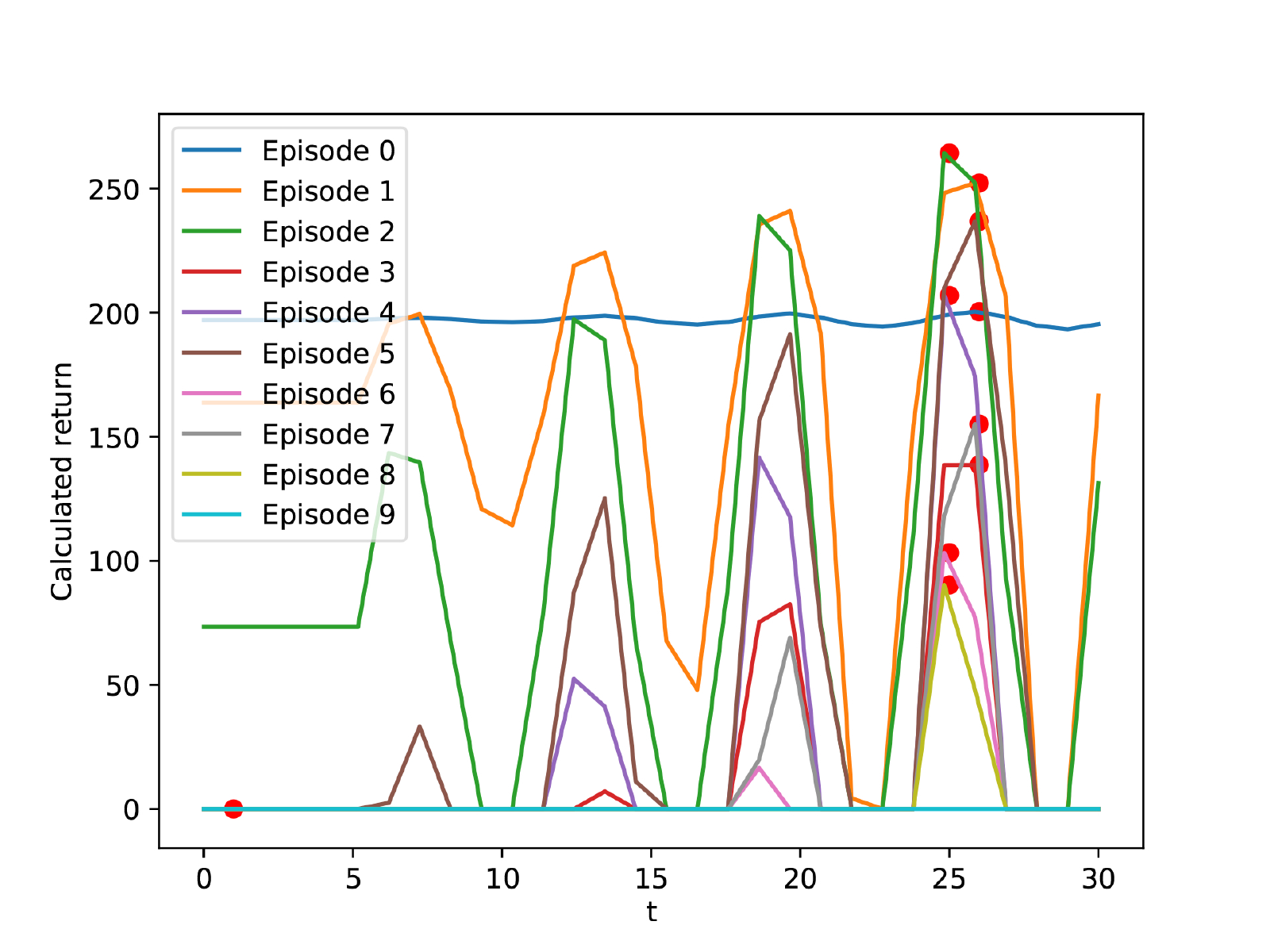}
  \caption{Computed returns in region $f_{s1}(t)$ under the "Maximum return" constraining strategy.}
\end{subfigure}%
\hspace{0.2cm}
\begin{subfigure}{.48\textwidth}
  \centering
  \includegraphics[trim={0 0 0 1cm},clip,width=1\linewidth]{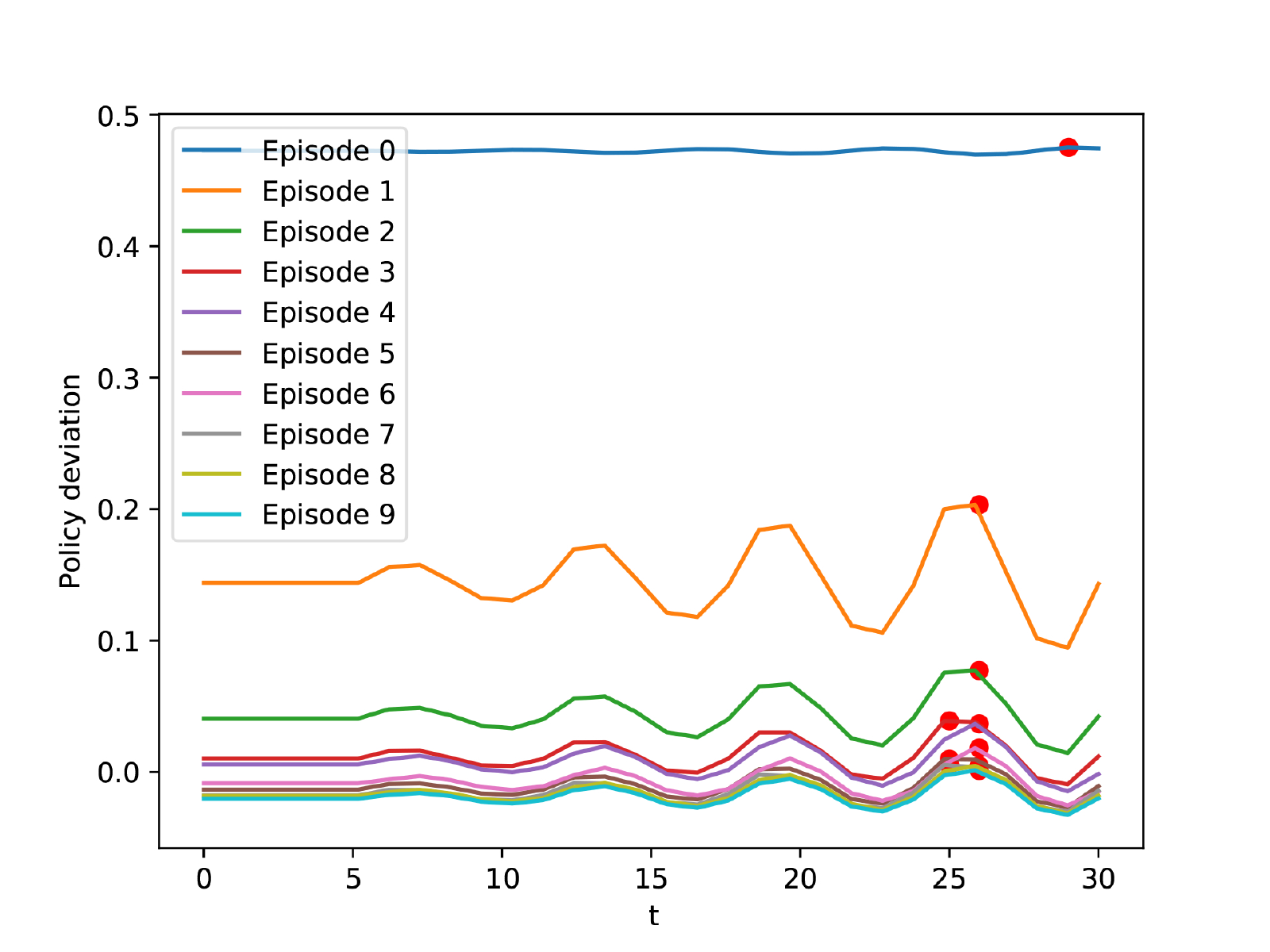}
  \caption{Computed policy deviations in region $f_{s1}(t)$ under the "Maximum policy deviation" constraining strategy.}
\end{subfigure}%

\caption{Functions used in each episode to select the state to constrain}
\label{fig:Regional_constraints_X}
\end{figure}
Figure \ref{fig:Regional_constraints_everything} summarizes the learning with constrained regions in the state space. In the figure, four episodes are shown with the constrained regions (gray areas) and the corresponding constrained states (stars within the areas). In addition, the trajectory the agent traversed in that episode is shown too (black line). In constrained REINFORCE this trajectory and the constrained states are concatenated. Thus, not only the constrained states but the actual MC trajectory contributes to the learning. However, as long as the constraints are not satisfied the rewards from the MC batch is suppressed by the large returns on the constrained states. Therefore, the two constraint selection strategies yield policies that are only slightly different from each other after learning from 9 episodes.  

Note that the prescribed regional constraints in this example are arbitrary and only for demonstration purposes. In the Cartpole environment it yields a bad policy. 

\begin{figure}[htb!]
\centering
  \centering
  \includegraphics[trim={1cm 6cm 0cm 6cm},clip,width=0.66\linewidth]{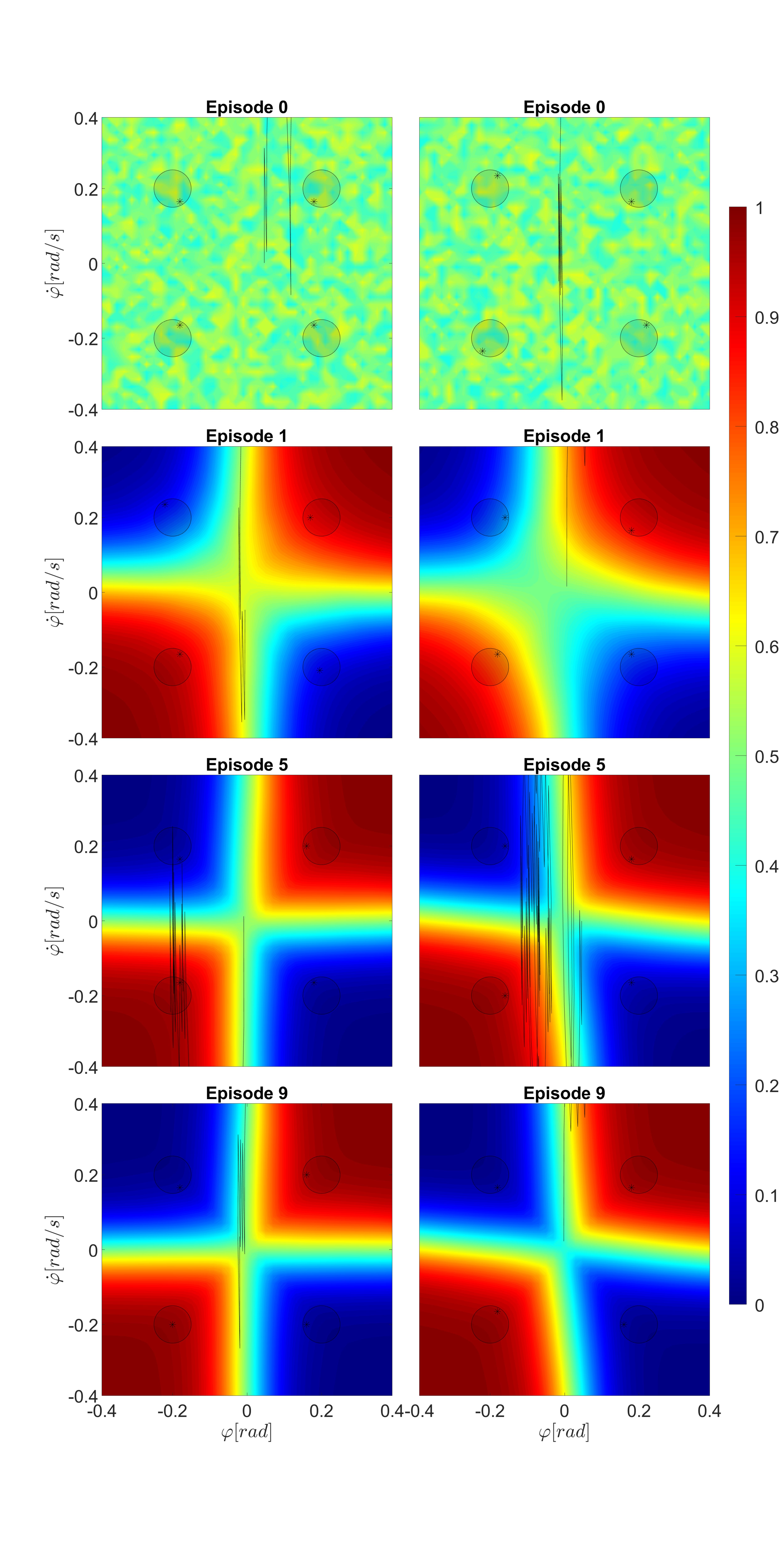}
  \caption{Probability of taking $a^0$ in the Cartpole environment at $x=0$, $\dot x = 0$ under regional constraints $f_{s1}(t),...,f_{s4}(t)$. Constrained regions are denoted with gray circles, and the actual constrained state $f_{s}(t_o)$ within each region by a $\star$. Black lines denote the agent's trajectory in the state-space in the given episode. Plots in the left column show policy evolution with "Maximum return", plots on the right depict the "Maximum policy deviation" constraining strategy.}
\label{fig:Regional_constraints_everything}
\end{figure}

\FloatBarrier

\section{Constraints in the simulations}
\label{app:constraints}
\begin{table}[h]
\centering
\begin{tabular}{l|rrrr|c} \hline
         & $x$    & $\dot{x}$ & $\varphi$ & $\dot{\varphi}$ & $\underline{\pi}_{ref}(a^0\mid s_{si})$  \\ \hline
$s_{s1}$ & $-2$ & $0$       & $0.25$       & $0.05$          &        $\leq 0.05$ \\
$s_{s2}$ & $-1.5$  & $0$       & $0.25$       & $0.05$          &           $\leq 0.05$ \\
$s_{s3}$ & $-1$  & $0$       & $0.25$       & $0.05$          &          $\leq 0.05$ \\
$s_{s4}$ & $-0.5$  & $0$       & $0.25$       & $0.05$          &          $\leq 0.05$ \\
$s_{s5}$ & $0$  & $0$       & $0.25$       & $0.05$          &          $\leq 0.05$ \\
$s_{s6}$ & $0.5$  & $0$       & $0.25$       & $0.05$          &          $\leq 0.05$ \\
$s_{s7}$ & $1$  & $0$       & $0.25$       & $0.05$          &          $\leq 0.05$ \\
$s_{s8}$ & $1.5$  & $0$       & $0.25$       & $0.05$          &          $\leq 0.05$ \\
$s_{s9}$ & $2$  & $0$       & $0.25$       & $0.05$          &          $\leq 0.05$ \\
$s_{s10}$ & $-2$ & $0$       & $-0.25$       & $-0.05$         & $\leq 0.95$        \\
$s_{s11}$ & $-1.5$ & $0$       & $-0.25$       & $-0.05$         & $\geq 0.95$       \\
$s_{s12}$ & $-1$ & $0$       & $-0.25$       & $-0.05$         & $\geq 0.95$         \\
$s_{s13}$ & $-0.5$  & $0$       & $-0.25$       & $-0.05$         & $\geq 0.95$    \\           
$s_{s14}$ & $0$ & $0$       & $-0.25$       & $-0.05$         & $\geq 0.95$         \\
$s_{s15}$ & $0.5$ & $0$       & $-0.25$       & $-0.05$         & $\geq 0.95$         \\
$s_{s16}$ & $1$ & $0$       & $-0.25$       & $-0.05$         & $\geq 0.95$         \\
$s_{s17}$ & $1.5$ & $0$       & $-0.25$       & $-0.05$         & $\geq 0.95$         \\
$s_{s18}$ & $2$ & $0$       & $-0.25$       & $-0.05$         & $\geq 0.95$  \\  \hline   
\end{tabular}
\caption{Constrained states in Cartpole}
\label{tab:constraints}
\end{table}

\begin{table}[h]
\centering
\begin{tabular}{l|lr|c} \hline
             &                           &                     & ${\pi}_{ref,reg}(a^0\mid f_{s}(t))$ \\ \hline
$f_{s1a}(t)$ & $x$ bounds:               & $[-2.5, \; 2.5]$    & $\geq 0.95$                      \\
             & $\dot x$ bounds:          & $0$                 &                                  \\
             & Bounding rectangle      & $(-0.4,\; 0.4 )$    &                                  \\
             & on $\varphi$-$\dot \varphi$: & $(-0.3,\;0.4)$      &                                  \\
             &                           & $(-0.267,\;0.133)$  &                                  \\
             &                           & $(-0.4,\;0.133)$    &                                  \\ \hline
$f_{s1b}(t)$ & $x$ bounds:               & $[-2.5, \; 2.5]$    & $\geq 0.95$                      \\
             & $\dot x$ bounds:          & $0$                 &                                  \\
             & Bounding rectangle      & $(-0.4,\; 0.133 )$  &                                  \\
             & on $\varphi$-$\dot \varphi$: & $(-0.267,\;0.133)$  &                                  \\
             &                           & $(-0.233,\;-0.133)$ &                                  \\
             &                           & $(-0.4,\;-0.133)$   & \multicolumn{1}{l}{}             \\ \hline
$f_{s1c}(t)$ & $x$ bounds:               & $[-2.5, \; 2.5]$    & $\geq 0.95$                      \\
             & $\dot x$ bounds:          & $0$                 &                                  \\
             & Bounding rectangle      & $(-0.4,\; -0.133 )$ &                                  \\
             & on $\varphi$-$\dot \varphi$: & $(-0.233,\;-0.133)$ &                                  \\
             &                           & $(-0.2,\;-0.4)$     &                                  \\
             &                           & $(-0.4,\;-0.4)$     & \multicolumn{1}{l}{}             \\ \hline
$f_{s2a}(t)$ & $x$ bounds:               & $[-2.5, \; 2.5]$    & $\leq 0.05$                      \\
             & $\dot x$ bounds:          & $0$                 &                                  \\
             & Bounding rectangle      & $(0.4,\; -0.4)$     &                                  \\
             & on $\varphi$-$\dot \varphi$: & $(0.2,\;-0.4)$      &                                  \\
             &                           & $(0.167,\;-0.133)$  &                                  \\
             &                           & $(0.4,\;-0.133)$    & \multicolumn{1}{l}{}             \\ \hline
$f_{s2b}(t)$ & $x$ bounds:               & $[-2.5, \; 2.5]$    & $\leq 0.05$                      \\
             & $\dot x$ bounds:          & $0$                 &                                  \\
             & Bounding rectangle      & $(0.4,\; -0.133 )$  &                                  \\
             & on $\varphi$-$\dot \varphi$: & $(0.167,\;-0.133)$  &                                  \\
             &                           & $(0.133,\;0.133)$   &                                  \\
             &                           & $(0.4,\;0.133)$     & \multicolumn{1}{l}{}             \\ \hline
$f_{s2c}(t)$ & $x$ bounds:               & $[-2.5, \; 2.5]$    & $\leq 0.05$                      \\
             & $\dot x$ bounds:          & $0$                 &                                  \\
             & Bounding rectangle      & $(0.4,\; 0.133 )$   &                                  \\
             & on $\varphi$-$\dot \varphi$: & $(0.133,\;0.133)$   &                                  \\
             &                           & $(0.1,\;0.4)$       &                                  \\
             &                           & $(0.4,\;0.4)$       & \multicolumn{1}{l}{}     \\ \hline        
\end{tabular}
\caption{Constrained regions in Cartpole}
\label{tab:constraints3}
\end{table}

\begin{table}[h]
\centering
\begin{tabular}{l|rrrrr|c} \hline
          & $x$    & $y$    & $\dot y$ & $\varphi$ &  $l$  & $\underline{\pi}_{ref}\geq 0.95\!$ \\ \hline
$s_{s1}\!$  & $0$    & $0$  &  $0$           & $0$             & $1$                & $j=0$                                                 \\
$s_{s2}\!$   & $0$    & $0.9$      & $-1$     & $0$              & $0$         & $j=2$                                                 \\
$s_{s3}\! $  & $0$    & $0.5$    & $-0.75$  & $0$             & $0$         & $j=2$                                                 \\
$s_{s4}\!$   & $0$    & $0.2$      & $-0.5$   & $0$             & $0$         & $j=2$                                                 \\
$s_{s5}\!$   & $0$    & $0.1$     & $-0.5$   & $0$             & $0$         & $j=2$                                                 \\
$s_{s6}\!$   & $0$    & $1$        & $0$      & $-0.25$          & $0$         & $j=1$                                                 \\
$s_{s7}\!$   & $0$    & $1$        & $0$      & $0.25$         & $0$         & $j=3$                                                 \\
$s_{s8}\!$   & $0$    & $0.5$      & $0$      & $-0.25$   & $0$             & $j=1$                                                 \\
$s_{s9}\!$   & $0$    & $0.5$       & $0$      & $0.25$          & $0$         & $j=3$                                                 \\
$s_{s10}\!$  & $0.3$  & $1.3$      & $0.1$    & $0$             & $0$         & $j=1$                                                 \\
$s_{s11}\!$  & $-0.3$ & $1.3$       & $-0.1$   & $0$        & $0$         & $j=3$   \\   \hline                                        
\end{tabular}
\caption{Constrained states in Lunar lander}
\label{tab:constraints2}
\end{table}

\end{appendices}

\FloatBarrier

\bibliography{main_wTheorems}

%%%%%%%%%%%%%%%%%%%%%%%%%%%%%%%%%%%%%%%%%%%%%%%%%%%%%%%%%%%%%%%%%%%%

\end{document}